\documentclass{article}

\usepackage[dblblindworkshop, final]{neurips_2026}
\workshoptitle{AI for Stochastic Dynamics}

\usepackage[utf8]{inputenc}
\usepackage[T1]{fontenc}
\usepackage{xurl}
\usepackage[hidelinks]{hyperref}
\usepackage{booktabs}
\usepackage{amsmath}
\usepackage{amssymb}
\usepackage{amsfonts}
\usepackage{amsthm}
\usepackage{mathtools}
\usepackage{nicefrac}
\usepackage{graphicx}
\usepackage{subcaption}
\usepackage{pdflscape}
\usepackage{algorithm}
\usepackage{algpseudocode}
\usepackage{microtype}
\usepackage{xcolor}

\graphicspath{{img/}{./img/}}

\newcommand{\E}{\mathbb{E}}

\newcommand{\CE}{\mathrm{CE}}

\newcommand{\Description}[1]{}

\newtheorem{lemma}{Lemma}
\newtheorem{proposition}{Proposition}
\newtheorem{corollary}{Corollary}

\title{StochLOB: Heavy-Tailed Score Matching for Robust Limit Order Book Trend Prediction}

\author{%
  \parbox{\dimexpr\textwidth-2\tabcolsep\relax}{\centering
    Arshia Abolghasemi\(^{1}\)\thanks{Equal contribution.\vspace{0.2 cm}} \quad
    Seyed Reza Tavakoli\(^{1}\)\footnotemark[1] \quad
    Parsa Naderi\(^{2}\) \quad
    Nazanin Mirzaei\(^{3}\) \\
    Behrad Jahani\(^{2}\) \quad
    Saba Yousefzadeh\(^{2}\) \quad
    Tahereh Javaheri\(^{4}\) \quad
    Mohammad Hosein Rohban\(^{2}\) \\
    Gholamali Aminian\(^{5}\) \quad
    Ali Habibnia\(^{1,6}\)\thanks{\vspace{-0.3 cm}Corresponding author: Ali Habibnia, \href{mailto:habibnia@vt.edu}{\texttt{habibnia@vt.edu}}.} \\[0.5em]
    \normalfont\small
    \(^{1}\)Dataism Laboratory for Quantitative Finance\\
    \(^{2}\)Sharif University of Technology \\
    \(^{3}\)Amirkabir University of Technology \\
    \(^{4}\)Boston University \\
    \(^{5}\)The Alan Turing Institute \\ \(^{6}\)Virginia Tech
  }%
}
\begin{document}

\maketitle

\begin{abstract}
Score-based diffusion methods usually perturb data with Brownian Gaussian
noise, corresponding to a light-tailed forward stochastic differential
equation (SDE). For market time series this SDE can be a poor abstraction:
limit order book (LOB) order flow is heavy-tailed, bursty, and occasionally
jump-like. We study \textsc{StochLOB}, a training-only score-matching
framework that changes the forward diffusion corruption law while leaving the
deployed predictor deterministic. Instead of Brownian-only Gaussian
corruption, the noising process uses two non-Gaussian SDE-inspired
alternatives: jump-\mbox{L\'evy} noise through a compound-Poisson variance
mixture, and clipped \(\alpha\)-stable noise through a positive-stable
Gaussian scale mixture. These corruptions inject finite jumps and
heavy-tailed residuals into training without requiring reverse-time diffusion
sampling at inference. The resulting score target has a useful
interpretation: its radial factor \(h_t(r)\) is the posterior expected
corruption precision, so large residuals are weighted according to the latent
corruption scale that most plausibly generated them. On chronologically
held-out Coinbase LOB data, the proposed variants remain competitive on
uncorrupted BTC and altcoin test horizons and improve several realized
tail-window stress diagnostics. An exploratory Feishu A-share transfer study
further tests whether the probability outputs remain usable under transaction
costs and trading-rule constraints.
Replication code and data-access information:
\url{https://github.com/DataismLab/StochLob}.
\end{abstract}

\section{Introduction}
\label{sec:intro}

Score-based diffusion methods are commonly built around a Brownian forward
stochastic differential equation (SDE), whose finite-time corruption is
Gaussian and light-tailed. This modeling choice is convenient, but it also
defines the residual geometry seen by a denoising score objective: large
coordinate movements become exponentially unlikely, and the learned restoring
force is tied to a Brownian quadratic penalty. For data generated by
stochastic systems with bursts, jumps, or persistent heavy tails, the
Brownian-only SDE can therefore be a poor training corruption law.

Limit order book (LOB) prediction makes this mismatch concrete. Order-flow
imbalance (OFI), depth changes, and short-horizon returns can remain quiet for
many bins and then move abruptly after an order sweep, a liquidity withdrawal,
or a short burst of directional trading. These large residuals are not rare
under the market process, but they are rare under Gaussian diffusion
corruption. This is difficult to reconcile with the heavy tails and jump-like
behavior observed in financial time series and LOB order flow
\citep{mandelbrot1963,cont2001stylized,cont2014ofi}.

The question we study is whether the forward diffusion SDE used for
score-based regularization should be modified so its noise law better matches
the stochastic input process. We introduce \textsc{StochLOB}, a training-only
framework for robust mid-price trend prediction. The deployed model remains a
standard classifier: the corruption sampler and score head are used only while
learning the shared representation, and test-time inference is a single
deterministic forward pass. This isolates the effect of changing the
stochastic diffusion corruption from the cost of reverse-time diffusion
sampling at deployment.

We consider two non-Gaussian SDE-inspired corruption families. The
jump-\mbox{L\'evy} variant adds a compound-Poisson jump component to a
Brownian background, implemented as a Gaussian scale mixture that exposes the
representation to mostly ordinary perturbations with occasional finite jumps.
The \(\alpha\)-stable variant replaces the fixed Brownian variance with a
clipped positive-stable scale mixture, producing broad heavy-tailed
perturbations while retaining numerical stability. Both variants use the same
classifier architecture and differ only in the stochastic training law and
auxiliary loss. The score target is also interpretable: the radial score
factor \(h_t(r)\) is a posterior precision under the latent corruption scale.
Thus the denoising objective does not apply the same restoring force to every
residual. Ordinary residuals receive high posterior precision, while extreme
residuals can be explained by larger latent corruption scales.

Our empirical analysis follows the same logic. First, distribution-fit and
tail diagnostics on Coinbase OFI show that the Gaussian assumption is a poor
description of the observed inputs. Second, on Coinbase BTC and four
additional crypto symbols, the proposed variants remain competitive on
standard held-out mid-price trend prediction and win selected horizons,
showing that the stochastic auxiliary objective does not require a large
standard-test accuracy tradeoff. Third, realized stress-window diagnostics
evaluate the regimes that
motivate the method: high-stress, high-variance, and jump-residual windows in
which Gaussian corruption is least plausible. Figure~\ref{fig:stress-main}
gives the headline stress diagnostic at \(k=100\). We also include an
exploratory Feishu A-share transfer study in which model probabilities are
converted into constrained portfolio signals, checking whether the forecasts
remain useful after transaction costs, lot-size rules, and T+1 selling
constraints. This transfer check is not meant to define a complete trading
system; it tests whether the learned probabilities survive a realistic
downstream decision rule.

\textbf{Contributions.} We make three contributions. First, we formulate
robust supervised representation learning as a mismatch between Brownian
diffusion-SDE corruption and heavy-tailed, jump-like stochastic inputs, using
LOB trend prediction as the concrete test bed. Second, we modify the
training-time diffusion corruption with jump-\mbox{L\'evy} and clipped
\(\alpha\)-stable noise while keeping the LOB predictor deterministic and free
of inference-time sampling. Third, we provide a posterior-precision
interpretation of the resulting score function \(h_t\), with proofs and
future-work mathematics in Appendix~\ref{app:h-score}.
Appendices~\ref{app:data}--\ref{app:feishu-portfolio} provide the data details,
score-table construction,
implementation and training algorithms, additional crypto results, and the
Feishu portfolio analysis.

\begin{figure}[t]
\centering
\includegraphics[width=\linewidth]{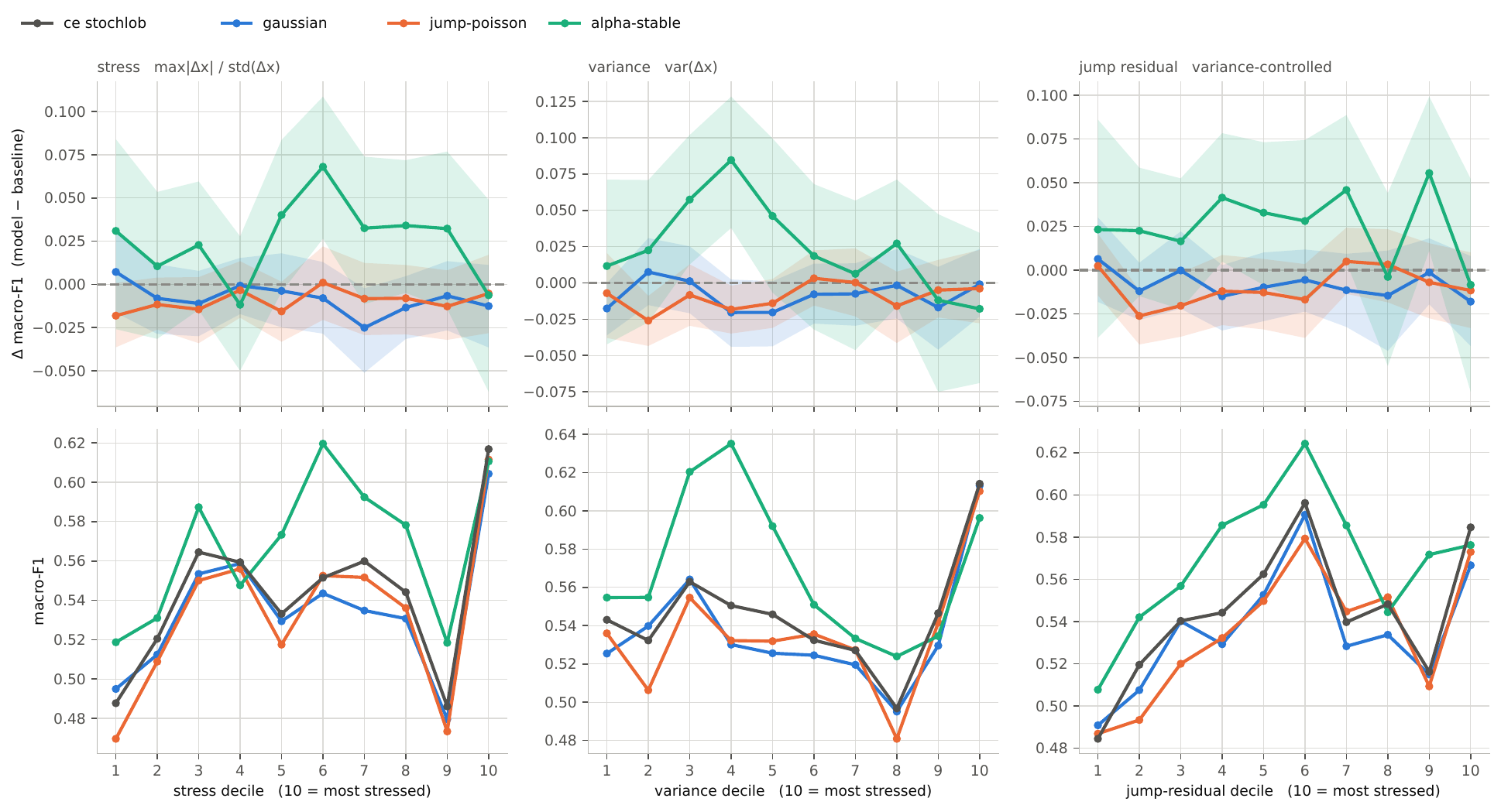}
\caption{Main stress result on chronologically held-out Coinbase BTC windows
at horizon \(k=100\) (\(31{,}842\) test windows). Columns sort windows by
realized stress, realized variance, and variance-controlled jump residual,
all computed from one-step increments of the level-averaged feature stream.
Decile 10 is the most turbulent. The top row reports macro-F1 by decile; the
bottom row reports paired differences against the same StochLOB trunk trained
with cross-entropy only. Shaded bands are \(95\%\) circular block-bootstrap
confidence intervals.}
\Description{Macro-F1 comparison for BTC at horizon k equals 100 across
high-stress, high-variance, and high-jump-residual test subsets.}
\label{fig:stress-main}
\end{figure}

\section{Related Work}
\label{sec:related}

\textbf{LOB trend prediction.}
LOB forecasting has moved from linear models toward architectures that exploit
both level structure and temporal order. DeepLOB combines convolutional,
Inception, and recurrent blocks for mid-price movement prediction
\citep{zhang2019deeplob}. TABL and BiN-TABL add bilinear mappings, temporal
attention, and input normalization for financial time series
\citep{tran2019tabl,tran2020bin}. Later work studies stronger attention and
benchmarking designs, including TLOB, LiT, LOBench, Axial-LOB, and
order-flow attention networks
\citep{berti2025tlob,lit2025,lobench2025,kisiel2022axiallob,bandealinaeini2023ofmatnet}.
Our experiments keep the standard three-class mid-price trend protocol and
compare against linear, recurrent, convolutional-attention, transformer, and
single-asset order-flow attention baselines trained on the same chronological
windows.

\textbf{Heavy tails and jumps.}
Heavy tails are a classical stylized fact of financial returns
\citep{mandelbrot1963,cont2001stylized}; LOB-specific studies show
non-Gaussian behavior in liquidity, impact, and order flow
\citep{bouchaud2002orderbook,mike2008liquidity,gould2013lob,cont2014ofi}.
Jump-diffusion models provide a finite-activity language for abrupt price
moves \citep{merton1976jump}, while stable processes model persistent
heavy-tailed fluctuations \citep{samorodnitsky1994stable}. We use these
objects as training-time corruption laws, not as complete market simulators.

\textbf{Diffusion as representation learning.}
Score-based and denoising diffusion models are typically introduced as
generative models \citep{ho2020ddpm,song2021sde}, but the denoising task can
also shape supervised representations. Joint diffusion training uses shared
networks for denoising and prediction \citep{deja2023jointdiffusion}, and
related work studies diffusion features as representation learners
\citep{yang2023repfusion}. Non-Gaussian diffusion replaces Brownian noise with
jump or heavy-tailed processes
\citep{yoon2023levy,shariatian2025dlpm,pandey2025heavytailed}. LOB diffusion
work models event streams, counterfactual books, or volume snapshots
\citep{zheng2024lobdif,wang2026difflob,wang2025diffvolume}; \textsc{StochLOB}
instead uses non-Gaussian corruption as a training-time auxiliary regularizer
for deterministic LOB classification.

\section{Background}
\label{sec:background}

Limit order book (LOB) data record the outstanding limit orders waiting at an
exchange. At any time, the book has a bid side, where traders are willing to
buy, and an ask side, where traders are willing to sell. Each side is ordered
by price into levels: the first bid and ask levels are the immediately
executable quotes, while deeper levels describe nearby liquidity that would
be reached after the top quotes are consumed. A raw book snapshot is therefore
a structured row rather than a flat tabular observation: for each displayed
level it contains a bid price, bid volume, ask price, and ask volume. With
twenty displayed levels, one snapshot contains eighty raw book fields before
trade-flow and derived features are added.

Across time, consecutive snapshots show how queues build, vanish, or move
across levels. These changes are the main signal in LOB prediction because
they reveal buying and selling pressure before it appears as a mid-price move.

Following the multi-level OFI representation of \citet{kolm2023deepofi},
which builds on the order-flow imbalance construction of \citet{cont2014ofi},
we use OFI as the primary LOB representation. OFI summarizes whether
liquidity is added, cancelled, or consumed on the bid and ask sides before
the mid-price moves. New bid depth or removed ask depth pushes signed flow
positive; removed bid depth or new ask depth pushes it negative. Aggregating
these signed changes converts irregular book updates into a regular
time-series signal for neural sequence models. Appendix~\ref{app:lob-representation}
gives the full LOB notation, Coinbase feature construction, and exact OFI
aggregation formula in Eq.~\eqref{eq:ofi-window}. Appendix~\ref{app:tail-evidence}
reports the train-only heavy-tail diagnostics, including
Figures~\ref{fig:diagnostic-fits-main} and \ref{fig:crypto-tail}.

The representation-learning question is whether the auxiliary corruption task
should match the market residuals seen by OFI windows. Many bins contain
small queue adjustments, but active periods can contain abrupt order sweeps,
liquidity withdrawals, or bursts of one-sided trading. A pooled Hill/Pareto
tail-index estimate on Coinbase OFI magnitudes gives
\(\hat{\alpha}=1.62\), well below Gaussian light-tail behavior
(Appendix~\ref{app:tail-evidence}). A Gaussian denoising task emphasizes a
fixed quadratic geometry around the clean input; the
non-Gaussian score tasks below instead expose the shared classifier trunk to
larger residuals during training. The reverse diffusion chain is never run at
inference, so the method is stochastic representation learning rather than a
generative market simulator.

\section{Methodology}
\label{sec:methodology}

\textsc{StochLOB} is designed to change the training signal while leaving the
deployed predictor unchanged. Each minibatch is seen in two views. The clean
window \(x_0\) is passed through the classifier path and optimized with the
ordinary three-class trend loss. A second, stochastically corrupted view
\(x_t\) is passed through a score head attached to the same temporal trunk.
Only the shared representation is common between these objectives; at test
time the noisy view, corruption sampler, score table, and score head are all
disabled.

\subsection{Problem and Labels}
\label{sec:research-problem}

Given a normalized LOB feature window \(x_0\in\mathbb{R}^{T\times F}\), the
classifier returns probabilities over
\(\{\mathrm{down},\mathrm{stationary},\mathrm{up}\}\). For Coinbase,
\(T=60\) and \(F=31\): each window covers 10 minutes sampled every 10 seconds.
Following the DeepLOB smoothed-mid protocol \citep{zhang2019deeplob}, the
return target at horizon \(k\) is
\begin{equation}
  r_k(\tau)=
\frac{
    \frac{1}{k}\sum_{i=1}^{k} p_{\tau+i}
    -
    \frac{1}{k}\sum_{i=1}^{k} p_{\tau-i}
}{
    \frac{1}{k}\sum_{i=1}^{k} p_{\tau-i}
},
  \label{eq:label-return}
\end{equation}
and labels are assigned by train-calibrated thresholds
\begin{equation}
  y_\tau =
  \begin{cases}
    \mathrm{up}, & r_k(\tau)>\alpha_{\mathrm{lab}},\\
    \mathrm{down}, & r_k(\tau)<-\alpha_{\mathrm{lab}},\\
    \mathrm{stationary}, & |r_k(\tau)|\le \alpha_{\mathrm{lab}}.
  \end{cases}
  \label{eq:trend-label}
\end{equation}
The standard supervised loss is
\(\mathcal{L}_{\mathrm{cls}}=\CE(f_\theta(x_0),y)\).

The chronological construction is important. For a window centered at
\(\tau\), the input uses only observations up to \(\tau\), while the label
uses the smoothed future mid-price in Eq.~\eqref{eq:label-return}. Windows are
therefore assigned to train, validation, or test only when the full look-back
context and the full future label horizon remain inside the same split. This
avoids leakage from overlapping windows near split boundaries.

\subsection{Generalized Scale-Mixture Score Matching}
\label{sec:non-gaussian-objective}

\textsc{StochLOB} corrupts an unperturbed input through a Gaussian scale
mixture:
\begin{equation}
  x_t = a_t x_0 + u,\qquad
  u=\sqrt{W_t}\xi,\qquad
  \xi\sim\mathcal{N}(0,I_d).
  \label{eq:gsm-main}
\end{equation}
Here \(a_t\) follows a standard variance-preserving or cosine schedule, while
the variance \(W_t\) is random. Gaussian diffusion is the special case
\(W_t=\sigma_t^2\). Non-Gaussian corruption changes only the scale law
\(p_t(W)\), so the same classifier can be trained under light-tailed,
jump-like, or heavy-tailed perturbations.

We choose Gaussian scale mixtures because they keep the score target
tractable while allowing non-Gaussian residuals. Conditional on \(W_t=w\),
the corruption is Gaussian and isotropic. Marginally, however, the variance
is random, so the residual distribution can place much more probability on
large moves. This gives a controlled way to expose the classifier
representation to market-shaped perturbations without changing the input
schema or the inference-time architecture.

The additive-noise density is
\begin{equation}
  q_t(u)=\E_{W_t}\left[
  (2\pi W_t)^{-d/2}\exp\left(-\frac{\lVert u\rVert^2}{2W_t}\right)
  \right].
  \label{eq:gsm-noise-density}
\end{equation}
Its score is radial:
\begin{equation}
  \nabla_u\log q_t(u)=-u\,h_t(\lVert u\rVert),
  \label{eq:gsm-score-main}
\end{equation}
where
\begin{equation}
  h_t(r)=
  \frac{\E_W[W^{-1}W^{-d/2}\exp(-r^2/2W)]}
       {\E_W[W^{-d/2}\exp(-r^2/2W)]}.
  \label{eq:radial-score}
\end{equation}
The function \(h_t(r)\) can be interpreted as the posterior expected corruption precision,
$\mathbb{E}\!\left[W_t^{-1}\mid \lVert u\rVert=r\right]$,
as shown in more detail in Appendix~\ref{app:h-score}
(Posterior-Precision Interpretation of \(h_t\)). Following
\citet{baule2025}, \textsc{StochLOB} tabulates this one-dimensional function
by Monte Carlo over residual radii and diffusion steps, then interpolates it
during training. Appendix~\ref{app:gsm-monte-carlo} establishes that this
tabulation procedure is a self-normalized importance-sampling estimator
\citep{geweke1989bayesian} of a Tweedie-type posterior expectation
\citep{efron2011tweedie}. For each fixed \(r>0\), the estimator is strongly
consistent, satisfies an explicit non-asymptotic error bound, and is
asymptotically normal with a closed-form variance under the Gaussian,
jump-Poisson, and clipped alpha-stable corruption laws used in this paper.
This posterior-score view is a Gaussian scale-mixture instance of classical
empirical-Bayes score identities \citep{andrews1974scale,efron2011tweedie}.

The score loss is
\begin{equation}
  \mathcal{L}_{\mathrm{score}}
  =
  \E_{t,W_t,\xi}\left[
    w_t
    \left\|
      s_\theta(x_t,t)+u\,h_t(\lVert u\rVert)
    \right\|^2_{\mathrm{mean}}
  \right].
  \label{eq:score-loss-main}
\end{equation}
The scalar \(w_t\) is a deterministic per-step denoising-score-matching
weight: \(w_t=\E[W_t]\) for finite-variance Gaussian and jump corruptions, and
\(w_t=1\) for the reported clipped \(\alpha\)-stable runs. The classifier and
score head share the temporal trunk, but only the classifier path is used at
test time. The complete training objective is
\[
  \mathcal{L}
  =
  \mathcal{L}_{\mathrm{cls}}
  +
  \lambda_{\mathrm{diff}}\mathcal{L}_{\mathrm{score}}.
\]

\subsection{Two Market-Shaped Corruptions}
\label{sec:market-corruptions}

\textbf{\textsc{StochLOB} (\mbox{L\'evy}).}
The jump variant uses a Brownian background plus a compound-Poisson gamma
variance:
\begin{equation}
  W_t=\sigma_t^2+\sum_{j=1}^{N_t}S_j,\qquad
  N_t\sim\mathrm{Poisson}(\Lambda_t),\quad
  S_j\sim\mathrm{Gamma}(\kappa,\theta).
  \label{eq:jgl-mixture}
\end{equation}
This exposes the representation to mostly ordinary perturbations with
occasional large shocks, matching the intuition of order sweeps or liquidity
withdrawals.

\textbf{\textsc{StochLOB} (\(\alpha\)-stable).}
The stable variant uses
\begin{equation}
  W_t=\sigma_t^2 A,\qquad
  A\sim\mathcal{S}^{+}_{\alpha/2},\qquad
  \E[\exp(-sA)]=\exp(-s^{\alpha/2}).
  \label{eq:asl-mixture}
\end{equation}
This creates polynomial-tail perturbations in the ideal case. The
implementation clips \(A\) at a high quantile for numerical stability, giving
a truncated-stable corruption law that remains much heavier-tailed than a
Gaussian. Appendix~\ref{app:data} (Data Details, Alpha Selection, and Tail
Diagnostics), especially Appendices~\ref{app:label-alpha}
and \ref{app:stable-alpha}, explains how \(\alpha\) and
\(\alpha_{\mathrm{lab}}\) are selected without test leakage.

\subsection{Architecture}
\label{sec:architecture-algorithm}

All proposed variants use the same classifier family: optional BiN
normalization, a two-layer bidirectional GRU, one temporal self-attention
block with timestep conditioning, attention pooling, and a three-class
classifier head. The score head is a training-only branch attached to the
shared representation. Figure~\ref{fig:model-architecture} shows the complete
training workflow and the inference path without stochastic corruption.
Using a shared architecture isolates the effect of the corruption law: changes
in test behavior are attributable to the auxiliary stochastic objective rather
than to a larger inference model.
Training, inference, and score-table pseudocode are given in
Appendix~\ref{app:gsm} (GSM Score Table) and Appendix~\ref{app:algorithms}
(Training and Inference Pseudocode).

\clearpage
\begin{landscape}
\begin{figure}[p]
\centering
\includegraphics[width=0.98\linewidth,keepaspectratio]{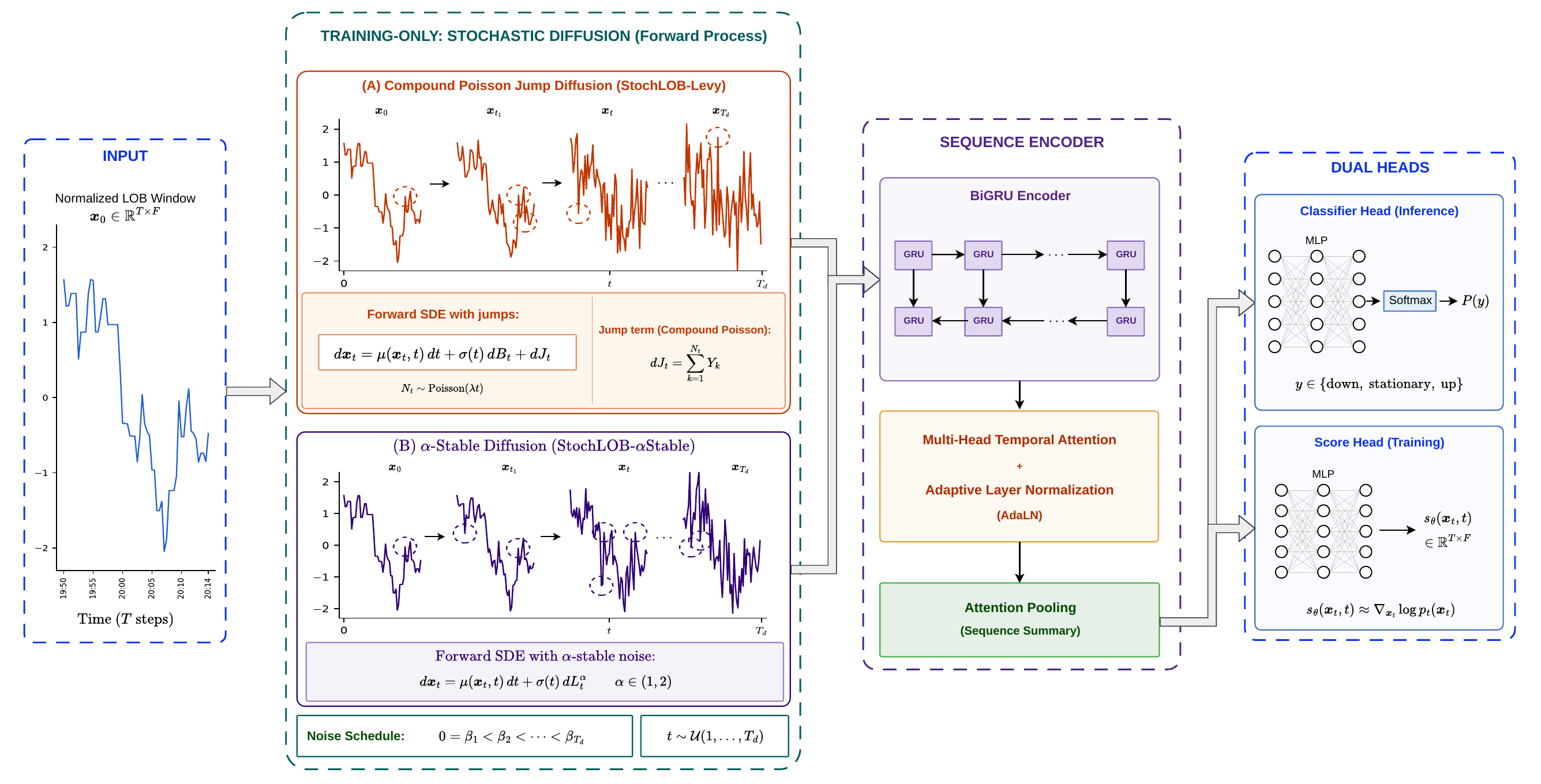}
\caption{StochLOB workflow. The classifier pools the unperturbed sequence
representation for three trend logits. The training-only score decoder
receives the noisy \(T\times F\) window, timestep embedding, and per-timestep
trunk context, and predicts a \(T\times F\) score field. At inference, the
corruption sampler and score decoder are disabled.}
\Description{Architecture diagram of the proposed StochLOB model. A normalized
LOB time-series is encoded by a BiGRU feature extractor, refined by temporal
attention, and summarized for classification. A separate score decoder
predicts a full time-feature score field during training only.}
\label{fig:model-architecture}
\end{figure}
\end{landscape}
\clearpage

\section{Datasets}
\label{sec:datasets}

We use point-in-time market data from crypto and equity order books.
Appendix~\ref{app:data} gives the exact timestamps, feature construction,
normalization, split counts, label calibration, and tail diagnostics.

\textbf{Coinbase crypto.}
The main benchmark is Coinbase order-book and trade data for BTC, ETH, BNB,
XRP, and USDT. The main text focuses on BTC as the deepest symbol; ETH, BNB,
XRP, and USDT results are in Appendix~\ref{app:results}.

\textbf{Feishu A-share universe.}
Feishu is a broader Chinese A-share transfer setting. Each asset-day combines
intraday OFI summaries with daily OHLCV and technical variables. The transfer
study evaluates the resulting directional probabilities through a common
constrained portfolio rule, described in Appendix~\ref{app:feishu-portfolio}.
\section{Experiments}
\label{sec:experiments}

\subsection{Experiment Setup}
\label{sec:experiment-setup}

All Coinbase experiments use chronological 70/15/15 train, validation, and
test splits, so every model is evaluated on later market periods than those
used for training. Checkpoints are selected by validation macro-F1 and then
frozen for all reported test metrics. The main tables use OFI windows with
\(T=60\), \(F=31\), and horizons \(k\in\{10,20,50,100\}\). Optimizer,
corruption, seed-sweep, and Feishu transfer details are given in
Appendix~\ref{app:models}.

\subsection{Mid-Price Trend Prediction Results}
\label{sec:trend-results}

Table~\ref{tab:btc-trend-main} reports standard held-out BTC mid-price trend
prediction results with seed-sweep variability. StochLOB (jump-Poisson) has the
highest mean macro-F1 at \(k=10\), StochLOB (alpha-stable) at \(k=20\),
and StochLOB variants are competitive at the remaining horizons. The
standard-test ranking is mixed, so the proposed variants are not presented as
across-the-board leaderboard winners. The primary claim is that non-Gaussian
score corruption remains competitive on BTC and improves stability in realized
tail-window regimes.

\begin{table*}[t]
\centering
\small
\setlength{\tabcolsep}{6pt}
\renewcommand{\arraystretch}{1.1}
\begin{tabular}{@{}lcccc@{}}
\toprule
& \multicolumn{2}{c}{$k=10$} & \multicolumn{2}{c}{$k=20$} \\
\cmidrule(lr){2-3}
\cmidrule(lr){4-5}
Model & Acc & F1 & Acc & F1 \\
\midrule
Logistic regression
& 0.6607$\pm$0.0007 & 0.6591$\pm$0.0007
& 0.6061$\pm$0.0049 & 0.6089$\pm$0.0048 \\

Linear VAR
& 0.6735$\pm$0.0009 & 0.6710$\pm$0.0008
& 0.6205$\pm$0.0017 & 0.6231$\pm$0.0015 \\

DeepLOB (CNN-LSTM)
& 0.6338$\pm$0.0167 & 0.6284$\pm$0.0142
& 0.5591$\pm$0.0128 & 0.5535$\pm$0.0079 \\

BiN-CTABL
& 0.6679$\pm$0.0045 & 0.6659$\pm$0.0041
& 0.6097$\pm$0.0027 & 0.6110$\pm$0.0024 \\

TLOB (Transformer)
& 0.6783$\pm$0.0068 & 0.6754$\pm$0.0054
& 0.6127$\pm$0.0076 & 0.6142$\pm$0.0068 \\

OFSATNet
& 0.6834$\pm$0.0027 & 0.6811$\pm$0.0026
& 0.6190$\pm$0.0109 & 0.6205$\pm$0.0099 \\
\midrule

StochLOB (jump-Poisson)
& \textbf{0.6891$\pm$0.0036} & \textbf{0.6866$\pm$0.0031}
& \underline{0.6212$\pm$0.0125} & \underline{0.6230$\pm$0.0117} \\

StochLOB (alpha-stable)
& \underline{0.6885$\pm$0.0069} & \underline{0.6848$\pm$0.0061}
& \textbf{0.6274$\pm$0.0036} & \textbf{0.6275$\pm$0.0028} \\

\bottomrule
\end{tabular}

\medskip

\begin{tabular}{@{}lcccc@{}}
\toprule
& \multicolumn{2}{c}{$k=50$} & \multicolumn{2}{c}{$k=100$} \\
\cmidrule(lr){2-3}
\cmidrule(lr){4-5}
Model & Acc & F1 & Acc & F1 \\
\midrule
Logistic regression
& 0.5802$\pm$0.0040 & 0.5853$\pm$0.0039
& 0.5493$\pm$0.0019 & 0.5538$\pm$0.0019 \\

Linear VAR
& 0.5948$\pm$0.0020 & 0.5998$\pm$0.0020
& \textbf{0.5632$\pm$0.0016} & \textbf{0.5689$\pm$0.0017} \\

DeepLOB (CNN-LSTM)
& 0.5311$\pm$0.0140 & 0.5233$\pm$0.0176
& 0.5138$\pm$0.0284 & 0.5133$\pm$0.0300 \\

BiN-CTABL
& 0.5736$\pm$0.0101 & 0.5764$\pm$0.0096
& 0.5411$\pm$0.0138 & 0.5445$\pm$0.0139 \\

TLOB (Transformer)
& 0.5856$\pm$0.0201 & 0.5870$\pm$0.0185
& 0.5574$\pm$0.0127 & 0.5604$\pm$0.0117 \\

OFSATNet
& 0.5828$\pm$0.0126 & 0.5861$\pm$0.0102
& 0.5418$\pm$0.0136 & 0.5438$\pm$0.0149 \\
\midrule

StochLOB (jump-Poisson)
& \textbf{0.6009$\pm$0.0152} & \textbf{0.6044$\pm$0.0140}
& 0.5536$\pm$0.0200 & 0.5570$\pm$0.0212 \\

StochLOB (alpha-stable)
& \underline{0.5929$\pm$0.0069} & \underline{0.5973$\pm$0.0066}
& \underline{0.5619$\pm$0.0137} & \underline{0.5663$\pm$0.0134} \\
\bottomrule
\end{tabular}
\caption{BTC OFI trend prediction on chronologically held-out windows.
Accuracy (Acc) and macro-F1 are reported as mean\(\pm\)std over five seeds
(\(13,21,42,84,168\)). Bold marks the best mean result in each horizon/metric
column; underlining marks the second best.}
\label{tab:btc-trend-main}
\end{table*}

\subsection{Stress Stability}
\label{sec:stress-stability}

Standard held-out trend prediction does not fully test behavior under
difficult market states. Figure~\ref{fig:stress-main} shows the BTC stress
comparison, and Table~\ref{tab:stress-extreme-metrics} reports a compact
summary of extreme-window results across stress, high-variance, and
jump-residual regimes. For each normalized OFI window, we average features
across levels and compute one-step increments \(\Delta x\). The stress score is
\(\max |\Delta x| / \mathrm{std}(\Delta x)\), the high-variance score is
\(\mathrm{var}(\Delta x)\), and the jump-residual score is the residual from
regressing \(\log \max |\Delta x|\) on \(\log \mathrm{var}(\Delta x)\) within
the same horizon. Extreme windows are the top decile of the corresponding
score. The full pre-specified regime-by-horizon grid is in Appendix
Table~\ref{tab:stress-extreme-full-app}, with implementation details in
Appendix~\ref{app:models}. These comparisons support the paper's empirical
claim: StochLOB variants are not uniformly best on standard held-out data, but
heavy-tailed score corruption gives measurable stability gains in several
tail-region test windows.

\begin{table*}[t]
\centering
\footnotesize
\setlength{\tabcolsep}{3pt}
\renewcommand{\arraystretch}{1.12}
\begin{tabular}{@{}llcccccc@{}}
\toprule
Regime & \shortstack{Best\\StochLOB} & \(k\) & \shortstack{StochLOB\\F1} &
\shortstack{Gaussian\\F1} & \shortstack{Best\\supervised} &
\shortstack{Gain vs.\\Gaussian} & \shortstack{Gain vs.\\supervised} \\
\midrule
Stress & alpha-stable & 10 & \textbf{0.712} & 0.703 & TLOB 0.708 & +0.008 & +0.004 \\
Stress & alpha-stable & 100 & \textbf{0.647} & 0.604 & LogReg 0.614 & +0.043 & +0.034 \\
High variance & jump-Poisson & 20 & \textbf{0.686} & 0.680 & LinVAR 0.651 & +0.006 & +0.034 \\
High variance & alpha-stable & 50 & \textbf{0.658} & 0.603 & LinVAR 0.645 & +0.055 & +0.014 \\
High variance & alpha-stable & 100 & \textbf{0.659} & 0.613 & LinVAR 0.628 & +0.046 & +0.030 \\
Jump residual & alpha-stable & 100 & \textbf{0.637} & 0.567 & LinVAR 0.597 & +0.070 & +0.039 \\
\bottomrule
\end{tabular}
\caption{Compact summary of extreme-window macro-F1 comparisons on held-out
BTC regimes. Each row reports the best heavy-tailed StochLOB variant in that
regime and horizon, the matched Gaussian StochLOB baseline, and the strongest
supervised comparator in the same row. \textbf{Best F1 in each row is shown in
bold.} Gains are absolute macro-F1 differences; Appendix
Table~\ref{tab:stress-extreme-full-app} reports the full regime-by-horizon grid.}
\label{tab:stress-extreme-metrics}
\end{table*}

\subsection{Feishu Portfolio Transfer}

Table~\ref{tab:feishu-portfolio-main} and
Figure~\ref{fig:feishu-portfolio-main} report the Feishu out-of-sample
portfolio transfer check. The classifier is trained for mid-price direction;
the fixed downstream portfolio rule tests whether its probabilities remain
usable after frictions and trading constraints. Details are in
Appendices~\ref{app:feishu-model-targets}, \ref{app:feishu-portfolio-setup},
and \ref{app:feishu-portfolio-results}. Among the active models shown,
StochLOB (jump-Poisson) reaches 38.78\% CAGR and Sharpe 1.04, with the lowest
maximum drawdown and volatility.

\begin{table*}[t]
\centering
\scriptsize
\setlength{\tabcolsep}{3pt}
\renewcommand{\arraystretch}{1.08}
\resizebox{\textwidth}{!}{%
\begin{tabular}{@{}lrrrrrrrr@{}}
\toprule
Model & CAGR(\%) & Sharpe & Max DD(\%) & Return(\%) & Vol.(\%) & Calmar & Win(\%) & Avg. Hold.\\
\midrule
LogReg
& 28.03
& 0.69
& $-$28.41
& 28.03
& 46.28
& 0.99
& 47.72
& 167.3\\

LinVAR
& $-$38.66
& $-$0.71
& $-$49.75
& $-$38.66
& 52.35
& $-$0.78
& 43.98
& 11.3\\

DeepLOB
& 31.72
& 0.68
& $-$36.49
& 31.72
& 82.61
& 0.87
& 48.55
& 50.1\\

TLOB
& 17.56
& 0.55
& $-$29.04
& 17.56
& 46.28
& 0.60
& 53.11
& 961.5\\
\midrule
StochLOB (jump-Poisson)
& \textbf{38.78}
& \textbf{1.04}
& \textbf{$-$23.20}
& \textbf{38.78}
& \textbf{36.64}
& \textbf{1.67}
& \textbf{54.36}
& \textbf{1861.0}\\
\bottomrule
\end{tabular}}
\caption{Feishu out-of-sample constrained portfolio metrics. Bold denotes the
best result in each column across the shown active models. Other evaluated
models are omitted because they mostly predict the flat class and therefore
generate little or no trading activity.}
\label{tab:feishu-portfolio-main}
\end{table*}

\begin{figure}[t]
\centering
\includegraphics[height=0.24\textheight,keepaspectratio]{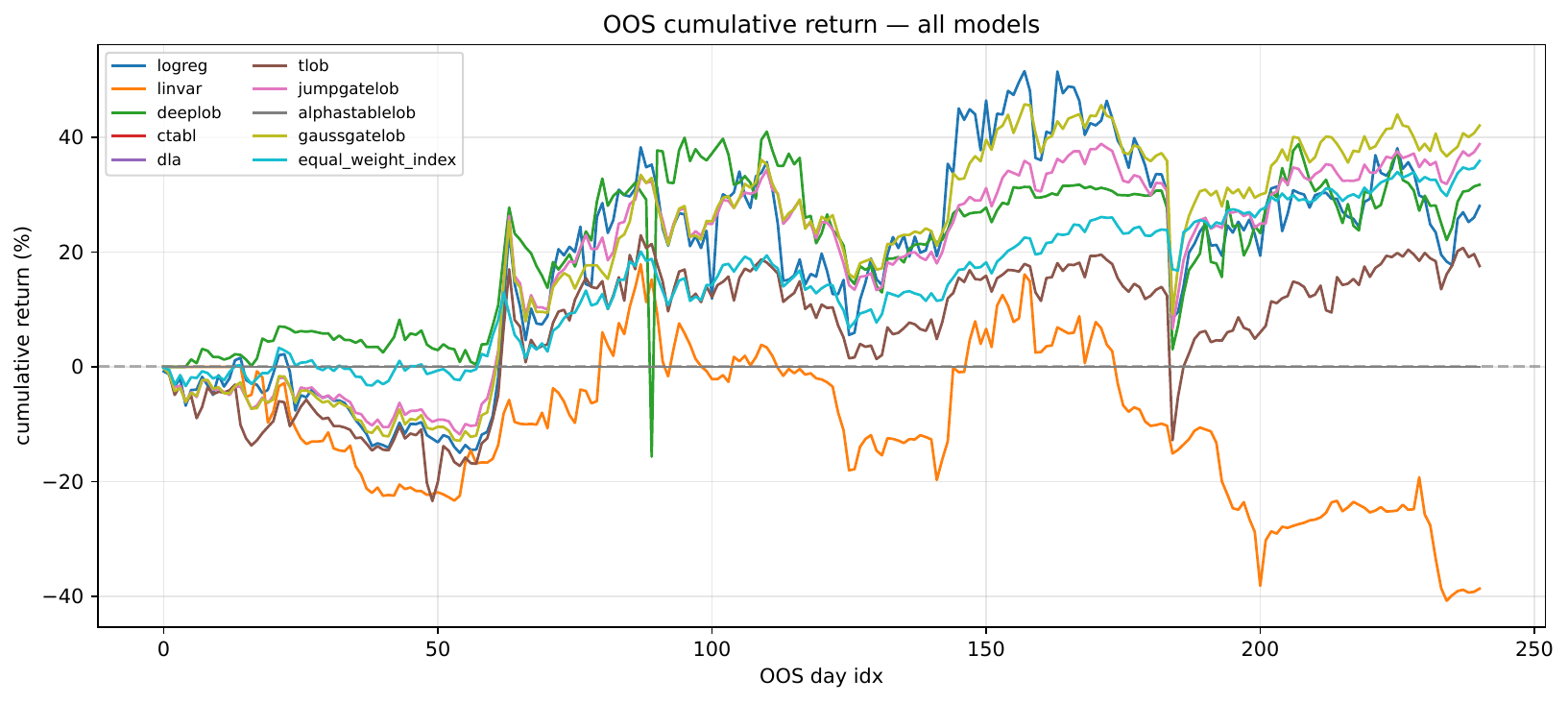}
\caption{Exploratory Feishu portfolio-management transfer result. Cumulative
returns are shown for the constrained portfolio rule using StochLOB variants,
supervised baselines, and the passive equal-weight reference.}
\Description{Line plot of Feishu out-of-sample cumulative returns for model
signals and the equal-weight reference.}
\label{fig:feishu-portfolio-main}
\end{figure}

\section{Discussion}
\label{sec:discussion}

The central message of \textsc{StochLOB} is not that one architecture should
dominate every standard benchmark cell. It is that the auxiliary corruption
used for representation learning should match the regime in which robustness is
needed. Crypto OFI inputs are heavy-tailed, ordinary Gaussian corruption is a
limited proxy for market stress, and non-Gaussian score matching gives the
shared representation a geometry that is more appropriate for large-residual
states. The posterior-precision view of \(h_t\) makes this mechanism explicit:
as the residual magnitude changes, the score adapts to the corruption scale
that is plausible under the chosen noise law.

There are limits. The compound-Poisson law is still isotropic after scaling;
real book shocks are signed, level-correlated, and state dependent. The
alpha-stable law is clipped for numerical stability, so the implementation is
a truncated-stable training distribution rather than a pure infinite-variance
process. Feishu also shows that robust probabilities are not automatically
portfolio-ready; the alpha-stable model may need threshold or utility-aware
calibration before its conservative signal becomes economically useful.

Future work should turn this into a sharper coverage criterion. The present
corruptions improve radial coverage by making large residual magnitudes
plausible during training, but they do not guarantee angular coverage: real
book stress is signed, cross-level, and concentrated along a small number of
market directions. Appendix~\ref{app:h-score}, especially
Appendix~\ref{app:future-work-math}, formalizes this with a second-order
change-of-measure argument
in which robustness transfer depends on both radial tail compatibility and an
angular R\'enyi term that can dominate in high-dimensional LOB windows. This
motivates four next steps: estimate state-dependent directional stress spectra
and replace scalar \(h_t(r)\) with
matrix-valued posterior precision; replay signed empirical jumps from real
order-book residuals; separate the score loss and temporal readout with
multi-seed ablations; and select portfolio thresholds on validation data under
transaction costs and risk constraints.

\section{Conclusion}
\label{sec:conclusion}

This work reframes LOB diffusion regularization around the noise geometry
observed in market data. \textsc{StochLOB} keeps inference as a deterministic
classifier, but trains the shared representation with jump-like or
heavy-tailed score-matching corruption. The resulting models remain
competitive on standard Coinbase trend prediction, improve several realized
stress-window diagnostics, and produce probabilities that can be tested under
a constrained Feishu portfolio rule. Appendix~\ref{app:h-score}
(Posterior-Precision Interpretation of \(h_t\)) proves that \(h_t\) is a
posterior expected corruption precision; Appendices~\ref{app:algorithms}
(Training and Inference Pseudocode) and~\ref{app:models} (Experimental Setup
and Implementation Details) provide the operational details.

\noindent\begin{minipage}{\linewidth}
\section*{Declarations}

\paragraph{Funding and computational resources.}
This study received no project-specific research funding.
No computing resources from Virginia Tech or the authors' other affiliated
universities were used in this study.

\paragraph{Code and data availability.}
The analyses used publicly available market data. Data coverage,
preprocessing, and chronological split construction are described in
Section~\ref{sec:datasets} and Appendix~\ref{app:data}. Replication code
and data-access information are provided in the project repository at
\url{https://github.com/DataismLab/StochLob}.
\end{minipage}\par

\bibliographystyle{plainnat}
\bibliography{references}

\clearpage
\appendix

\section{Data Details, Alpha Selection, and Tail Diagnostics}
\label{app:data}

\subsection{Coinbase Crypto Protocol and LOB Representation}
\label{app:lob-representation}

The Coinbase dataset covers BTC, ETH, BNB, XRP, and USDT. The
raw data were crawled from Coinbase as paired order-book and trade CSV files
and then normalized into the shared parquet schema used by the experiments.
The feed is sampled every 10 seconds from 2026-06-18 23:42:10 UTC to
approximately 2026-07-13 23:10:40 UTC, giving 26 calendar dates and roughly
213k snapshots per symbol. Each input window has \(T=60\) steps, so it covers
10 minutes of recent book and trade-flow state. Prediction horizons
\(k=\{10,20,50,100\}\) correspond to 100, 200, 500, and 1000 seconds.

At sampling time \(\tau\), the exchange exposes the top \(L\) bid and ask
levels. Bid level \(f\) is the \(f\)-th highest buy price and ask level \(f\)
is the \(f\)-th lowest sell price, so
\[
  b_\tau^1 \ge b_\tau^2 \ge \cdots \ge b_\tau^L,
  \qquad
  a_\tau^1 \le a_\tau^2 \le \cdots \le a_\tau^L .
\]
With aggregate resting volumes \(v_{\tau,b}^f\) and \(v_{\tau,a}^f\), a
depth-\(L\) snapshot is
\[
  \mathcal{B}_\tau
  =
  \left\{
    (b_\tau^f,v_{\tau,b}^f),
    (a_\tau^f,v_{\tau,a}^f)
  \right\}_{f=1}^{L}.
\]
The level-1 spread is \(a_\tau^1-b_\tau^1\), and the mid-price is
\((b_\tau^1+a_\tau^1)/2\). The Coinbase raw snapshot columns are
\(\{b_\tau^f,v_{\tau,b}^f,a_\tau^f,v_{\tau,a}^f\}_{f=1}^{20}\), sampled into
10-second bins before OFI and auxiliary feature construction.

OFI mode uses 20 bid/ask order-flow coordinates plus the 11 auxiliary features
in Table~\ref{tab:features-app}. Feature extraction is per calendar day for
order-flow differences. Normalization is a causal trailing rolling
\(z\)-score over the chronological series, using a 2000-bin window. Thus a
feature at bin \(t\) uses only statistics from the past and current bin, never
future test data.

Following \citet{cont2014ofi} and \citet{kolm2023deepofi}, cumulative OFI for
level \(f\) over an interval \((t-h,t)\) is
\begin{equation}
  \mathrm{OFI}_{t-h,t}^{f}
  =
  \sum_{n=N(t-h)+1}^{N(t)}
  \left(\mathrm{bOF}_{n}^{f}-\mathrm{aOF}_{n}^{f}\right).
  \label{eq:ofi-window}
\end{equation}

All Coinbase splits are chronological: 70\% train, 15\% validation, and 15\%
test. A window is assigned to a split only when its look-back context and
future label horizon remain inside the same raw interval. This prevents
overlapping windows from leaking across train/test boundaries.
Table~\ref{tab:coinbase-datetimes-app} reports the crawl coverage and raw
test-start boundary for each crypto symbol.

For BTC, the row-index split boundaries are 2026-06-18 23:42:10 UTC for
the first training row, 2026-07-06 11:40:10 UTC for the first validation row,
and 2026-07-10 05:24:10 UTC for the first test row; the final raw timestamp is
2026-07-13 23:10:40 UTC. Because the input look-back and label horizon must
remain inside the test interval, the first BTC test label center is
2026-07-10 05:34:00 UTC. The last label centers are 2026-07-13 23:09:00,
23:07:20, 23:02:10, and 22:53:50 UTC for \(k=10,20,50,100\), respectively; in
all four cases the final future label timestamp is 2026-07-13 23:10:40 UTC.
Table~\ref{tab:btc-splits-app} gives the resulting BTC split sizes and
train-calibrated label thresholds by horizon.

\begin{table}[htbp]
\centering
\small
\begin{tabular}{@{}lrrcc@{}}
\toprule
Symbol & Snapshots & Raw start (UTC) & Raw end (UTC) & Test start (UTC) \\
\midrule
BNB & 213,679 & 2026-06-18 23:42:10 & 2026-07-13 23:10:30 & 2026-07-10 05:24:30 \\
BTC & 213,335 & 2026-06-18 23:42:10 & 2026-07-13 23:10:40 & 2026-07-10 05:24:10 \\
ETH & 213,593 & 2026-06-18 23:42:10 & 2026-07-13 23:10:30 & 2026-07-10 05:28:10 \\
USDT & 213,448 & 2026-06-18 23:42:10 & 2026-07-13 23:10:40 & 2026-07-10 05:27:10 \\
XRP & 213,752 & 2026-06-18 23:42:10 & 2026-07-13 23:10:30 & 2026-07-10 05:27:40 \\
\bottomrule
\end{tabular}
\caption{Coinbase crawl coverage and chronological test-start timestamps
computed from the normalized local parquet files. Test start denotes the raw
snapshot index at the 85\% chronological split boundary before applying
horizon-specific look-back and future-label validity constraints.}
\label{tab:coinbase-datetimes-app}
\end{table}

\begin{table}[htbp]
\centering
\small
\begin{tabular}{@{}lp{5.6cm}@{}}
\toprule
Block & Features \\
\midrule
Microstructure & spread/mid; log total bid-depth over total ask-depth;
one-step log mid-return \\
Trade flow & log buy volume; log sell volume; signed trade imbalance;
log trade count; VWAP deviation from mid \\
Activity proxies & log trade count; spread/mid; absolute one-step log
mid-return \\
\bottomrule
\end{tabular}
\caption{The 11 auxiliary features appended to the 20 OFI levels. Repeated
activity proxies are intentional substitutes for unavailable intra-bin
quote-update counts in the resampled exchange feed.}
\label{tab:features-app}
\end{table}

\begin{table}[htbp]
\centering
\small
\begin{tabular}{@{}lrrrrr@{}}
\toprule
\(k\) & Real time & \(\alpha_{\mathrm{lab}}\) & Train & Val. & Test \\
\midrule
10  & 100 s  & 0.000093 & 147,218 & 31,831 & 31,932 \\
20  & 200 s  & 0.000189 & 147,208 & 31,821 & 31,922 \\
50  & 500 s  & 0.000362 & 147,178 & 31,791 & 31,892 \\
100 & 1000 s & 0.000543 & 147,087 & 31,741 & 31,842 \\
\bottomrule
\end{tabular}
\caption{BTC split sizes and train-calibrated label thresholds. Other crypto
symbols use the same protocol with symbol-specific thresholds.}
\label{tab:btc-splits-app}
\end{table}

\subsection{How \texorpdfstring{\(\alpha_{\mathrm{lab}}\)}{alpha lab} Is Selected}
\label{app:label-alpha}

The label threshold \(\alpha_{\mathrm{lab}}\) controls the dead band between
``stationary'' and a directional move. For crypto, it is calibrated on the
training split only. We first compute the smoothed-mid trend ratio
\(r_k(\tau)\) from Eq.~\eqref{eq:label-return} over the training interval,
then set
\[
  \alpha_{\mathrm{lab}}
  =
  \mathrm{quantile}_{1/3}\left(\left|r_k(\tau)\right|:
  \tau\in\mathrm{train}\right).
\]
This gives an approximately balanced three-class split while preserving the
chronological validation and test intervals. If a config sets
\texttt{label\_alpha} to a positive number, that fixed value is used instead;
the Coinbase experiments reported here use \texttt{label\_alpha: -1},
meaning train-only calibration.

For Feishu mid-price experiments, the default label mode is quantile rather
than a symmetric fixed band. The training-split return distribution is cut at
the \(1/3\) and \(2/3\) quantiles:
\[
  \ell=\mathrm{quantile}_{1/3}(r_t:\,t\in\mathrm{train}),\qquad
  u=\mathrm{quantile}_{2/3}(r_t:\,t\in\mathrm{train}).
\]
Labels are down for \(r_t<\ell\), stationary for \(\ell\le r_t\le u\), and up
for \(r_t>u\). The same \((\ell,u)\) thresholds are then frozen for validation
and out-of-sample portfolio evaluation. A fixed \(\pm\alpha\) band is also
available, but the Feishu mid-price experiments reported here use
\texttt{label\_mode: "quantile"} and \texttt{class\_quantiles: [0.3333,
0.6667]}.

\subsection{How the Alpha-Stable Tail Index Is Selected}
\label{app:stable-alpha}

The alpha-stable parameter \(\alpha\) is separate from
\(\alpha_{\mathrm{lab}}\). It controls the tail thickness of the training
corruption: smaller values show the score head larger perturbations more
often, while \(\alpha=2\) recovers Gaussian-type behavior. We select this
quantity from training data, not from validation or test performance.

For the Coinbase configs, \(\alpha\) is based on a Hill/Pareto tail estimate
of normalized OFI magnitudes. The pooled diagnostic gives
\(\hat{\alpha}=1.62\), which motivated the heavy-tailed corruption family. The
final per-symbol configs use clipped symbol-specific estimates in the
practical range \([1.2,1.9]\), shown in Table~\ref{tab:stable-alpha-app}. BTC
therefore uses a slightly heavier-than-Gaussian setting near the observed OFI
tail, while BNB is clipped at the upper end because its estimate is closer to
light-tailed behavior under this diagnostic.

\begin{table}[htbp]
\centering
\small
\begin{tabular}{@{}lccccc@{}}
\toprule
Symbol & BTC & ETH & BNB & XRP & USDT \\
\midrule
\(\alpha\) used for \(\alpha\)-stable corruption
& 1.687 & 1.731 & 1.900 & 1.615 & 1.685 \\
\bottomrule
\end{tabular}
\caption{Coinbase alpha-stable corruption indices from train-only OFI tail
calibration.}
\label{tab:stable-alpha-app}
\end{table}

For Feishu, we do not reuse the crypto per-symbol calibration because the
task is a broad A-share transfer experiment with daily asset windows and a
different market microstructure. The Feishu alpha-stable configs therefore
use \(\alpha=1.5\) as a conservative heavy-tailed transfer default. This value
is not tuned on the out-of-sample portfolio period.

\subsection{Heavy-Tail Evidence}
\label{app:tail-evidence}

The distribution-fit diagnostics rank candidate laws by Kolmogorov-Smirnov
distance on subsampled series. Figure~\ref{fig:diagnostic-fits-main} shows
the BTC mid-price-increment and feature-PC1 distribution fits. Across all
crypto symbols, the normal reference is not the leading fit. For mid-price
increments, it ranks 11th
(BTC), 13th (BNB), 12th (ETH), 20th (XRP), and 16th (USDT) among roughly 83
candidates. For the first principal component of the full feature tensor, the
normal ranks are 31st, 65th, 25th, 40th, and 26th. Feishu shows the same
direction: normal ranks 24th for mid-price increments and 63rd for the first
feature principal component. Figure~\ref{fig:crypto-tail} gives the pooled
OFI tail evidence used to motivate the alpha-stable family.

\begin{figure}[htbp]
\centering
\begin{subfigure}{0.45\linewidth}
  \includegraphics[width=\linewidth]{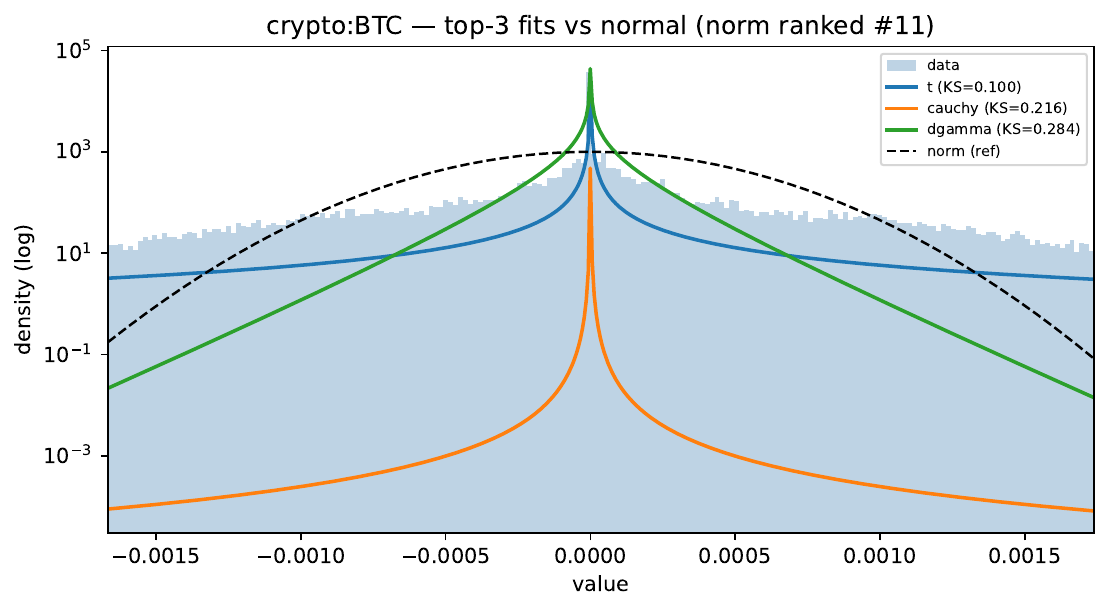}
  \caption{BTC mid-price increments}
\end{subfigure}\hfill
\begin{subfigure}{0.45\linewidth}
  \includegraphics[width=\linewidth]{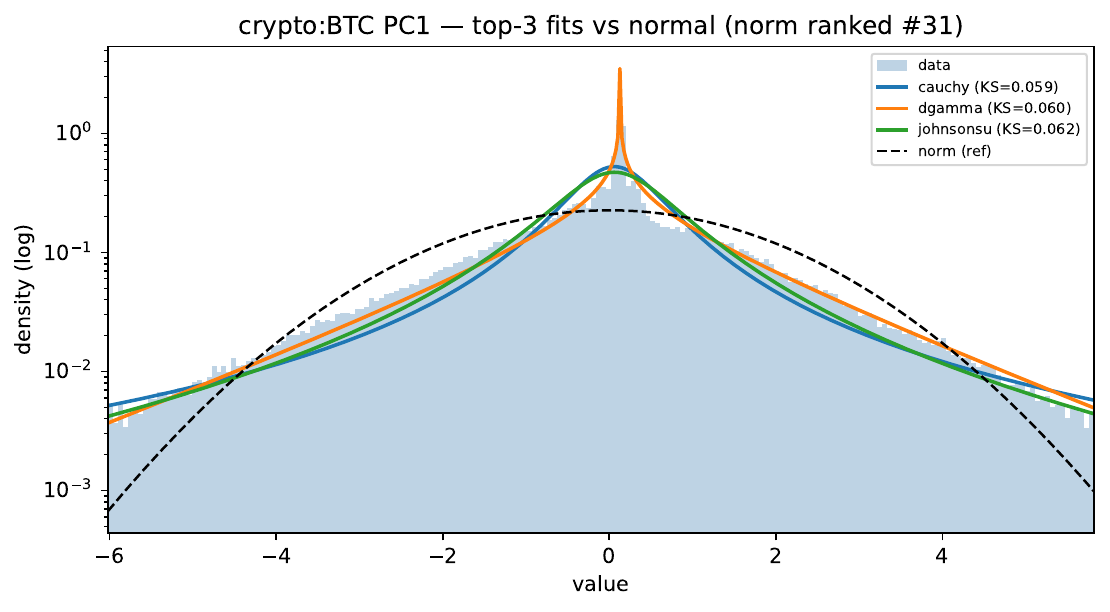}
  \caption{BTC feature PC1}
\end{subfigure}
\caption{Heavy-tail distribution-fit diagnostics. The dashed normal
reference is not the leading fit: it ranks 11th for BTC increments and 31st
for the first principal component of the feature tensor.}
\Description{Two distribution-fit plots comparing empirical BTC histograms
with heavy-tailed alternatives and a dashed normal reference.}
\label{fig:diagnostic-fits-main}
\end{figure}

\begin{figure}[htbp]
\centering
\includegraphics[height=0.28\textheight,keepaspectratio]{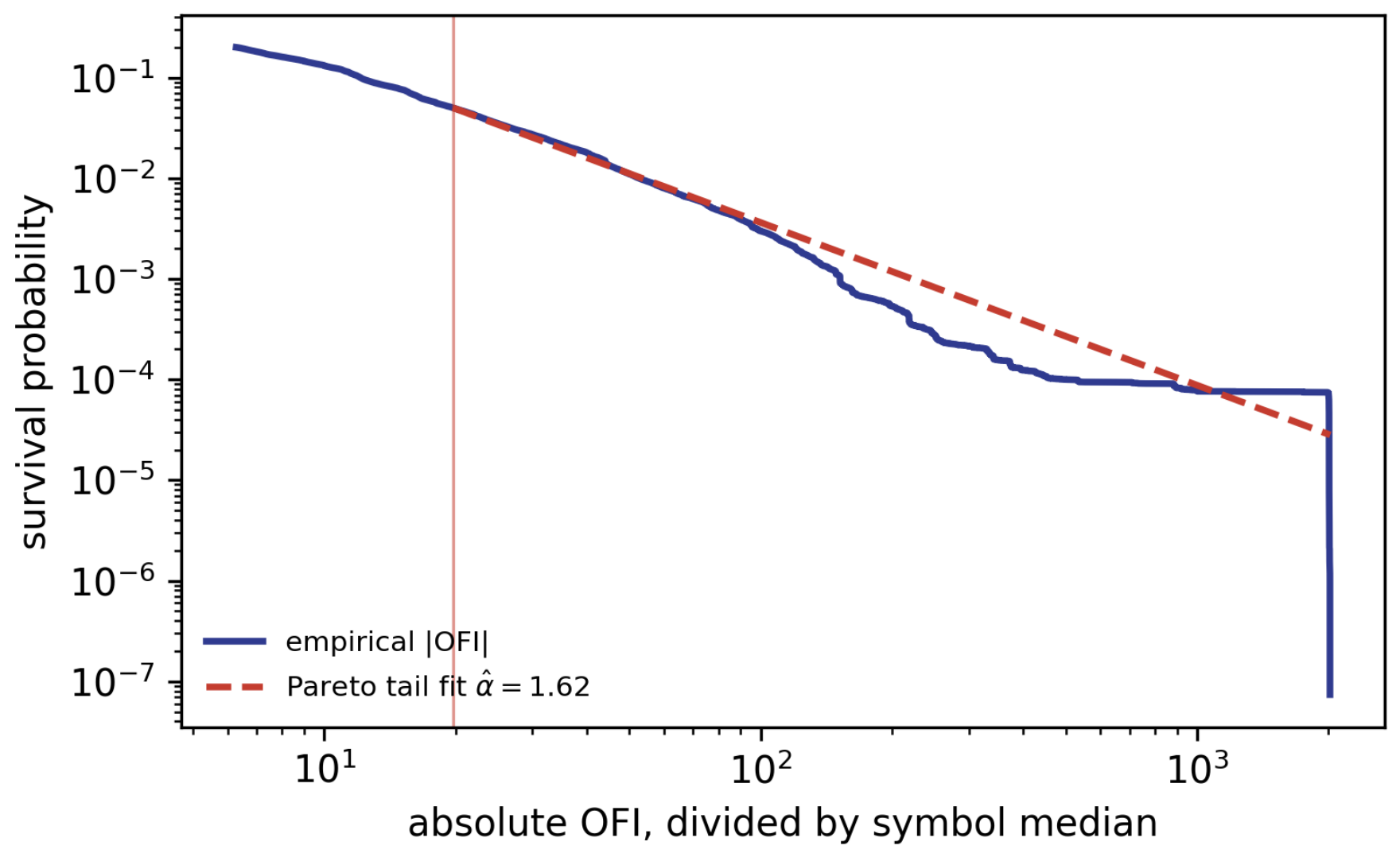}
\caption{Log-log complementary CDF of pooled absolute crypto OFI, normalized
by symbol median. The top 5\% Hill/Pareto fit gives
\(\hat{\alpha}=1.62\), supporting heavy-tailed OFI corruption.}
\Description{Log-log complementary CDF of pooled absolute OFI with a fitted
power-law tail slope corresponding to tail index 1.62.}
\label{fig:crypto-tail}
\end{figure}

\subsection{Feishu Data Details}

The Feishu dataset is the larger Chinese A-share protocol used for the
portfolio transfer experiment. The feature tensor has 259 coordinates per
asset-day. The first 240 coordinates are intraday OFI features: 24 trading-day
slots times 10 book levels. The slot grid uses the Chinese A-share continuous
trading sessions: 09:40, 09:50, ..., 11:20 in the morning and 13:00, 13:10,
..., 15:00 in the afternoon. OFI is computed at the tick level within each
asset-day, aligned to the 10-minute slot marks, and zero-filled for off-grid
5-minute bars. The OFI block is normalized with a causal 5-day rolling
\(z\)-score and clipped to \([-5,5]\).

The remaining 19 coordinates are daily OHLCV and technical variables: 1-, 5-,
10-, and 20-day returns; 5- and 20-day volatility; Amihud illiquidity; volume
\(z\)-score; 14-day RSI; distances from 5- and 20-day moving averages;
open-to-close return; high-low range; close-VWAP distance; and raw open,
close, volume, low, and high. These daily features are cross-sectionally
standardized across assets on the same day and clipped to the same numeric
range as the OFI block. Table~\ref{tab:feishu-data-app} summarizes the
Feishu universe, feature count, splits, and evaluation period.

\begin{table}[t]
\centering
\caption{Feishu data and evaluation protocol.}
\label{tab:feishu-data-app}
\footnotesize
\setlength{\tabcolsep}{3pt}
\renewcommand{\arraystretch}{0.95}
\begin{tabular}{@{}p{0.20\columnwidth}p{0.74\columnwidth}@{}}
\toprule
Block & Feishu protocol \\
\midrule
Universe & 2,306 Chinese A-share symbols \\
Rows & 1,606,720 asset-day observations \\
Features & 240 intraday OFI and 19 daily features \\
Slots & 24 ten-minute morning/afternoon slots \\
History & 50 trading days per model input \\
Splits & 756,787 train; 189,882 validation; 542,612 out-of-sample \\
Period & Two years in-sample; one year out-of-sample \\
\bottomrule
\end{tabular}
\end{table}
\section{GSM Score Table}
\label{app:gsm}

For \(u=\sqrt{W}\xi\), \(\xi\sim\mathcal{N}(0,I_d)\), and \(W>0\), the
marginal density is
\[
  \phi_t(u)=\E_W[(2\pi W)^{-d/2}\exp(-\lVert u\rVert^2/2W)].
\]
Differentiating under the integral gives
\[
  \nabla_u\phi_t(u)
  =
  -u\,\E_W[W^{-1}(2\pi W)^{-d/2}\exp(-\lVert u\rVert^2/2W)].
\]
Dividing by \(\phi_t(u)\) yields Eq.~\eqref{eq:gsm-score-main}. The table is
therefore one-dimensional in the radius even though \(d=T\cdot F=1860\) for
the crypto dataset. Appendix~\ref{app:h-score} gives the complete
posterior-precision proof and its Gaussian, jump, and alpha-stable
corollaries.

\begin{algorithm}[htbp]
\caption{\texttt{BuildScoreTable}}
\label{alg:score-table-app}
\small
\begin{algorithmic}[1]
\Require diffusion horizon \(T_{\max}\); corruption sampler \(p_t(W)\);
dimension \(d\); grid size \(n_r\); Monte Carlo size \(M\)
\Ensure per-timestep radius grids \(R_t\) and radial precision values \(H_t\)
\For{\(t=1,\dots,T_{\max}\)}
  \State draw \(W_1,\dots,W_M\sim p_t(W)\)
  \State draw \(C_1,\dots,C_M\sim\chi^2_d\)
  \State estimate typical radii \(r_j=\sqrt{W_jC_j}\)
  \State set \(r_{\max}=1.02\,\mathrm{quantile}_{0.9999}(\{r_j\})\)
  \State create \(R_t=\{r_1^\star,\dots,r_{n_r}^\star\}\) on \([0,r_{\max}]\)
  \For{each \(r^\star\in R_t\)}
    \State compute log weights
    \(g_j=-\tfrac{d}{2}\log W_j-(r^\star)^2/(2W_j)\)
    \State set \(m=\max_j g_j\)
    \State set
    \(H_t(r^\star)=
    \sum_j e^{g_j-m}W_j^{-1}/\sum_j e^{g_j-m}\)
  \EndFor
\EndFor
\State \Return \(\{(R_t,H_t)\}_{t=1}^{T_{\max}}\)
\end{algorithmic}
\end{algorithm}

\subsection{Monte Carlo Justification}
\label{app:gsm-monte-carlo}

Algorithm~\ref{alg:score-table-app} estimates \(h_t(r)\) by replacing the two
expectations in Eq.~\eqref{eq:radial-score} with Monte Carlo averages. The
argument below is pointwise in the tabulated radius: reusing one batch of
draws across the radius grid creates correlation between grid entries, but it
does not change the marginal guarantee at any fixed grid point.

Fix a timestep \(t\) and a radius \(r>0\). Here \(r=\lVert u\rVert\), so
negative values are outside the domain. We state the Monte Carlo guarantee for
positive radii only. The table may include \(r=0\) as a numerical
interpolation endpoint, but Proposition~\ref{prop:gsm-mc-accuracy} does not claim
a finite-sample bound there for the clipped alpha-stable law: the bounds in
Lemma~\ref{lem:gsm-mc-bounds} diverge as \(r\downarrow0\). If \(W_t\) is
bounded away from zero, as in the Gaussian and jump-Poisson corruptions used
here, the same ratio-estimator argument extends to \(r=0\) with bounds based
on that lower bound. For clipped alpha-stable corruption, the zero endpoint
would require separate negative-moment/concentration conditions. This is fine
because under continuous Gaussian-mixture corruption, sampled training
residuals satisfy \(\mathbb{P}(\lVert u\rVert=0)=0\): conditional on any
realization \(W_t=w>0\), the residual
\(u\mid W_t=w\sim\mathcal{N}(0,wI_d)\) is absolutely continuous, so
\[
  \mathbb{P}(\lVert u\rVert=0\mid W_t=w)
  =
  \mathbb{P}(u=0\mid W_t=w)
  =
  0.
\]
Taking expectation over \(W_t\) gives
\(\mathbb{P}(\lVert u\rVert=0)=0\). Define
\[
  g_1(w,r)=w^{-1-d/2}\exp\!\left(-\frac{r^2}{2w}\right),
  \qquad
  g_2(w,r)=w^{-d/2}\exp\!\left(-\frac{r^2}{2w}\right),
\]
and write
\[
  N_t(r)=\E_{p_t}[g_1(W,r)],
  \qquad
  D_t(r)=\E_{p_t}[g_2(W,r)].
\]
Then \(h_t(r)=N_t(r)/D_t(r)\). The empirical table entry is
\[
  \widehat h_t(r)
  =
  \frac{
    M^{-1}\sum_{i=1}^{M} g_1(W_i,r)
  }{
    M^{-1}\sum_{i=1}^{M} g_2(W_i,r)
  },
  \qquad
  W_i\stackrel{\mathrm{i.i.d.}}{\sim}p_t(W).
\]
Since \(g_1(w,r)=w^{-1}g_2(w,r)\), this is equivalently
\[
  \widehat h_t(r)
  =
  \sum_{i=1}^{M}\widehat\omega_i W_i^{-1},
  \qquad
  \widehat\omega_i
  =
  \frac{g_2(W_i,r)}{\sum_{j=1}^{M}g_2(W_j,r)}.
\]
The table value is therefore a self-normalized importance-sampling estimator
(\citealp{geweke1989bayesian}; see also \citealp[Ch.~9]{owen2013monte} for a
general treatment). The proposal is the corruption law \(p_t(W)\),
the unnormalized target is
\[
  \pi_t(dw\mid r)
  =
  \frac{g_2(w,r)p_t(dw)}{D_t(r)},
\]
and the estimated quantity is
\[
  \E_{\pi_t(\cdot\mid r)}[W^{-1}]
  =
  \frac{\E_{p_t}[W^{-1}g_2(W,r)]}{\E_{p_t}[g_2(W,r)]}
  =
  h_t(r).
\]
By Proposition~\ref{prop:ht-posterior-precision}, this is the Tweedie-type
posterior expected corruption precision
\(\E[W_t^{-1}\mid\lVert u\rVert=r]\).

\begin{lemma}[Pointwise boundedness]
\label{lem:gsm-mc-bounds}
For every \(r>0\),
\[
  \sup_{w>0} g_1(w,r)
  =
  \left(\frac{d+2}{e r^2}\right)^{1+d/2}
  =: B_1(r),
  \qquad
  \sup_{w>0} g_2(w,r)
  =
  \left(\frac{d}{e r^2}\right)^{d/2}
  =: B_2(r).
\]
Both bounds are finite for fixed \(r>0\).
\end{lemma}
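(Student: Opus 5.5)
The plan is to treat both suprema as elementary one-variable maximization problems and handle them with a single template. For fixed \(r>0\) and exponent \(p>0\), consider
\[
  \psi_p(w)=w^{-p}\exp\!\left(-\frac{r^2}{2w}\right),\qquad w>0,
\]
so that \(g_1(\cdot,r)=\psi_{1+d/2}\) and \(g_2(\cdot,r)=\psi_{d/2}\). Since \(\psi_p>0\), it is convenient to maximize \(\log\psi_p(w)=-p\log w-r^2/(2w)\). The substitution \(s=1/w\in(0,\infty)\) turns this into the concave function \(\ell(s)=p\log s-\tfrac{r^2}{2}s\), which makes the existence and uniqueness of an interior maximizer immediate.

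The steps, in order, are as follows.
\begin{enumerate}
  \item Set \(\ell'(s)=p/s-r^2/2=0\), which gives the unique critical point \(s^\star=2p/r^2\), i.e.\ \(w^\star=r^2/(2p)\).
  \item Use strict concavity, \(\ell''(s)=-p/s^2<0\), to conclude that this is the global maximum on \((0,\infty)\).
  \item Confirm the behavior at the endpoints. As \(w\downarrow0\) the Gaussian factor \(\exp(-r^2/2w)\) kills the polynomial blow-up, and as \(w\to\infty\) the factor \(w^{-p}\to0\). Hence the supremum is attained and is not approached at the boundary.
  \item Substitute \(w^\star\) to get
  \[
    \sup_{w>0}\psi_p(w)=\left(\frac{2p}{r^2}\right)^{p}e^{-p}=\left(\frac{2p}{e r^2}\right)^{p}.
  \]
  \item Specialize the exponent. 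Taking \(p=1+d/2\) gives \(2p=d+2\) and yields \(B_1(r)\). Taking \(p=d/2\) gives \(2p=d\) and yields \(B_2(r)\).
  \item Note that both values are finite positive reals because \(r>0\) and \(d\ge1\).
\end{enumerate}

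There is no real obstacle in this argument. The only point needing care is step 3: the supremum must be justified as attained rather than merely a critical value. This is exactly where \(r>0\) enters. At \(r=0\) the function \(\psi_p\) is unbounded as \(w\downarrow0\), consistent with the paper's remark that the bounds diverge as \(r\downarrow0\).
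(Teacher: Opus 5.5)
Your proof is correct and follows essentially the same route as the paper: both treat \(g_1\) and \(g_2\) as instances of \(w^{-a}\exp(-r^2/2w)\), locate the unique critical point \(w^\star=r^2/(2a)\), and evaluate it to get \((2a/(er^2))^a\). Your substitution \(s=1/w\), which makes the log-objective strictly concave, certifies the global maximum slightly more cleanly than the paper's endpoint-limit argument, but this is a cosmetic difference.
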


\begin{proof}
Both functions have the form
\(f_a(w)=w^{-a}\exp(-r^2/2w)\), with \(a=d/2\) for \(g_2\) and
\(a=1+d/2\) for \(g_1\). Since
\[
  \frac{d}{dw}\log f_a(w)
  =
  -\frac{a}{w}+\frac{r^2}{2w^2},
\]
the unique critical point is \(w^\star=r^2/(2a)\). Also
\(f_a(w)\to0\) as \(w\to0^+\) and as \(w\to\infty\), so this critical point
is the global maximum. Evaluating \(f_a(w^\star)\) gives
\[
  f_a(w^\star)
  =
  \left(\frac{2a}{e r^2}\right)^a.
\]
Substituting \(a=d/2\) and \(a=1+d/2\) gives the two displayed bounds.
\end{proof}

This boundedness is the only fact needed to cover the corruption laws used in
the paper. For Gaussian corruption, \(p_t\) is the point mass
\(\delta_{\sigma_t^2}\). For the jump-Poisson law,
\[
  W_t=\sigma_t^2+\sum_{j=1}^{N_t}S_j\ge\sigma_t^2>0
  \quad \text{a.s.}
\]
For the clipped alpha-stable law,
\[
  W_t=\sigma_t^2\min(A,Q_{0.999}(A))>0
  \quad \text{a.s.}
\]
The clipped law may still have support arbitrarily close to zero, but for
any fixed \(r>0\) Lemma~\ref{lem:gsm-mc-bounds} makes \(g_1\) and \(g_2\)
bounded uniformly over \(w>0\). Thus \(N_t(r)\) and \(D_t(r)\) are finite, and
\(D_t(r)>0\).

\begin{proposition}[Statistical accuracy of one score-table entry]
\label{prop:gsm-mc-accuracy}
Fix \(t\) and \(r>0\), and let \(p_t\) be the Gaussian, jump-Poisson, or
clipped alpha-stable corruption law used by \textnormal{StochLOB}. Then:
\[
  \widehat h_t(r)\to h_t(r)
  \qquad
  \text{almost surely as }M\to\infty .
\]
This is the usual consistency property of self-normalized importance-sampling
estimators \citep{geweke1989bayesian}. Moreover, for any \(\delta\in(0,1)\), if
\[
  M
  \ge
  \frac{2B_2(r)^2\log(4/\delta)}{D_t(r)^2},
\]
then with probability at least \(1-\delta\),
\[
  \left|\widehat h_t(r)-h_t(r)\right|
  \le
  \frac{2\{B_1(r)+h_t(r)B_2(r)\}}{D_t(r)}
  \sqrt{\frac{\log(4/\delta)}{2M}}.
\]
This gives a finite-sample guarantee of the kind developed for
importance-sampling estimators more generally \citep{agapiou2017importance}.
Finally,
\[
  \sqrt{M}\bigl(\widehat h_t(r)-h_t(r)\bigr)
  \xrightarrow{d}
  \mathcal N\!\left(0,\sigma_t^2(r)\right),
\]
This is the corresponding asymptotic normality result for this estimator class
(\citealp{geweke1989bayesian}; \citealp[Ch.~9]{owen2013monte}), where
\[
  \sigma_t^2(r)
  =
  \frac{1}{D_t(r)^2}
  \operatorname{Var}_{p_t}\!\left[
    g_2(W,r)\bigl(W^{-1}-h_t(r)\bigr)
  \right].
\]
For deterministic Gaussian corruption this variance is zero, recovering the
exact value \(h_t(r)=1/\sigma_t^2\).
\end{proposition}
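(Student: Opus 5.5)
The plan is to treat \(\widehat h_t(r)\) as a ratio \(\widehat N/\widehat D\) of two empirical means, where \(\widehat N=M^{-1}\sum_i g_1(W_i,r)\) and \(\widehat D=M^{-1}\sum_i g_2(W_i,r)\), and to lean on Lemma~\ref{lem:gsm-mc-bounds}. That lemma gives \(0\le g_1\le B_1(r)\) and \(0\le g_2\le B_2(r)\) uniformly in \(w>0\). As noted after the lemma, this makes \(N_t(r),D_t(r)\) finite, and \(D_t(r)>0\) because \(g_2(\cdot,r)>0\) on \((0,\infty)\), where each of the three laws is supported. Every claim then reduces to standard facts about bounded i.i.d.\ averages. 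No law-specific calculation is needed beyond the support property already checked for the Gaussian, jump-Poisson, and clipped alpha-stable cases.

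For consistency, the strong law of large numbers gives \(\widehat N\to N_t(r)\) and \(\widehat D\to D_t(r)\) almost surely, since both integrands are bounded and hence integrable. Because \(D_t(r)>0\) and \((x,y)\mapsto x/y\) is continuous at \((N_t,D_t)\), the continuous mapping theorem gives \(\widehat h_t(r)\to h_t(r)\) almost surely.

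For the finite-sample bound, apply Hoeffding's inequality to each mean with confidence \(\delta/2\). Write \(\varepsilon_k=B_k(r)\sqrt{\log(4/\delta)/(2M)}\). Then, with probability at least \(1-\delta\), both \(|\widehat N-N_t|\le\varepsilon_1\) and \(|\widehat D-D_t|\le\varepsilon_2\) hold. Next use the identity
\[
\widehat h-h=\frac{(\widehat N-N_t)-h_t(r)(\widehat D-D_t)}{\widehat D},
\]
which gives \(|\widehat h-h|\le(\varepsilon_1+h_t\varepsilon_2)/\widehat D\). The sample-size condition \(M\ge 2B_2^2\log(4/\delta)/D_t^2\) is exactly \(\varepsilon_2\le D_t/2\). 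On the good event this forces \(\widehat D\ge D_t/2\), and therefore
\[
|\widehat h-h|\le \frac{2(\varepsilon_1+h_t\varepsilon_2)}{D_t},
\]
which is the displayed bound. This step is the delicate one: the stated threshold on \(M\) is tuned precisely to keep the denominator away from zero, so the constants have to be tracked carefully.

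For asymptotic normality, the identity above can be rewritten as
\[
\sqrt M(\widehat h-h)=\frac{M^{-1/2}\sum_i Y_i}{\widehat D},\qquad Y_i=g_2(W_i,r)\bigl(W_i^{-1}-h_t(r)\bigr).
\]
Each \(Y_i\) has mean \(N_t-h_tD_t=0\). Its variance is finite because \(Y_i=g_1-h_tg_2\) is bounded. The central limit theorem applies to the numerator, \(\widehat D\to D_t\) almost surely, and Slutsky's lemma then yields the limit \(\mathcal N(0,\sigma_t^2(r))\) with the stated variance. In the Gaussian case, \(W\equiv\sigma_t^2\), so \(h_t(r)=1/\sigma_t^2\) and \(Y_i\equiv0\), giving zero variance and \(\widehat h_t(r)=1/\sigma_t^2\) exactly.
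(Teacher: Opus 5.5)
Your proposal is correct and follows essentially the same route as the paper: the strong law of large numbers plus continuous mapping for consistency, and for the finite-sample bound, Hoeffding at level $\delta/2$ per mean with a union bound, the ratio identity, and the event $\widehat D_t\ge D_t(r)/2$ guaranteed by the sample-size condition. The only minor difference is in the asymptotic-normality step: you use the exact identity $\sqrt{M}(\widehat h_t(r)-h_t(r))=M^{-1/2}\sum_i Y_i/\widehat D_t$ with a scalar CLT and Slutsky, whereas the paper applies a bivariate CLT to $(g_1,g_2)$ and the delta method for $x/y$; both give the same variance $\sigma_t^2(r)$, and your version avoids the linearization step.
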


\begin{proof}
Lemma~\ref{lem:gsm-mc-bounds} gives
\(\E_{p_t}|g_1(W,r)|<\infty\) and \(\E_{p_t}|g_2(W,r)|<\infty\). The strong
law of large numbers therefore gives
\[
  M^{-1}\sum_{i=1}^M g_1(W_i,r)\to N_t(r),
  \qquad
  M^{-1}\sum_{i=1}^M g_2(W_i,r)\to D_t(r)
\]
almost surely. Since \(D_t(r)>0\), the continuous mapping theorem gives
\(\widehat h_t(r)\to N_t(r)/D_t(r)=h_t(r)\) almost surely.

For the finite-sample statement, write
\[
  \widehat N_t=M^{-1}\sum_i g_1(W_i,r),
  \qquad
  \widehat D_t=M^{-1}\sum_i g_2(W_i,r).
\]
Algebra gives
\[
  \left|\widehat h_t(r)-h_t(r)\right|
  \le
  \frac{
    |\widehat N_t-N_t(r)|
    +h_t(r)|\widehat D_t-D_t(r)|
  }{\widehat D_t}.
\]
By Lemma~\ref{lem:gsm-mc-bounds}, \(g_1(W_i,r)\in[0,B_1(r)]\) and
\(g_2(W_i,r)\in[0,B_2(r)]\). Hoeffding's inequality
\citep{hoeffding1963probability} and a union bound imply that, with
probability at least \(1-\delta\),
\[
  |\widehat N_t-N_t(r)|
  \le
  B_1(r)\sqrt{\frac{\log(4/\delta)}{2M}},
  \qquad
  |\widehat D_t-D_t(r)|
  \le
  B_2(r)\sqrt{\frac{\log(4/\delta)}{2M}}.
\]
The stated lower bound on \(M\) ensures
\(\widehat D_t\ge D_t(r)/2\) on this event. Substitution gives the displayed
non-asymptotic error bound.

For asymptotic normality, apply the multivariate central limit theorem to
\[
  Y_i=(g_1(W_i,r),g_2(W_i,r)).
\]
Its covariance is finite by Lemma~\ref{lem:gsm-mc-bounds}. The delta method
\citep{vandervaart1998asymptotic} applied to
\(\varphi(x,y)=x/y\), whose gradient at \((N_t(r),D_t(r))\) is
\[
  \nabla\varphi
  =
  \left(\frac{1}{D_t(r)},-\frac{h_t(r)}{D_t(r)}\right),
\]
yields an asymptotic variance
\[
  \frac{1}{D_t(r)^2}
  \operatorname{Var}_{p_t}\!\left[
    g_1(W,r)-h_t(r)g_2(W,r)
  \right].
\]
Using \(g_1(W,r)=W^{-1}g_2(W,r)\) gives the stated closed form.
\end{proof}

The proposition shows that the Monte Carlo part of the table has the usual
\(M^{-1/2}\) pointwise error scale, with a variance determined by how much
posterior-weighted precision \(W^{-1}\) varies under the corruption law. The
log-sum-exp computation in Algorithm~\ref{alg:score-table-app} implements the
same ratio estimator in a numerically stable form. Linear interpolation and
clamping are deterministic post-processing steps applied after the grid
values have been estimated; they do not change the pointwise Monte Carlo
claim above, although they add the ordinary discretization error of a finite
radius grid.

StochLOB (\mbox{L\'evy}) uses \(n_r=512,M=20{,}000\). StochLOB
(\(\alpha\)-stable) uses \(n_r=256,M=8192\). Training radii are linearly
interpolated and clamped beyond the largest tabulated radius. The Gaussian
limit \(W\equiv\sigma_t^2\) recovers \(h_t(r)=1/\sigma_t^2\), so no table is
needed for a purely Gaussian baseline.

\section{Experimental Setup and Implementation Details}
\label{app:models}

\subsection{Shared Architecture}

Both \textsc{StochLOB} variants use optional BiN normalization, a two-layer
bidirectional GRU with 64 hidden units per direction, one DiT-style temporal
self-attention block with four heads and adaLN-Zero timestep conditioning,
attention pooling for trend classification, and a small convolutional score
head. The corruption law and loss differ; the inference path without
stochastic corruption is the same. Figure~\ref{fig:model-architecture} shows
the workflow in the main text.

\subsection{Optimization}

Optimization is fixed across the proposed models: AdamW
\citep{loshchilov2019adamw}, learning rate \(10^{-4}\), default moments
\((\beta_1,\beta_2)=(0.9,0.999)\), \(\epsilon=10^{-8}\), weight decay
\(3\times10^{-4}\), and a per-update schedule with 500 linear warmup steps
followed by cosine decay over the planned training steps. We also use
gradient clipping at 1.0, label smoothing 0.1, and dropout 0.2. Coinbase runs
use batch size 64 and at most 80 epochs with patience 15. Feishu runs use
batch size 256 and at most 30 epochs. Checkpoints are selected by validation
macro-F1 on unperturbed windows, not by the score loss, because the final
deployed model is a classifier. The diffusion auxiliary weight is
\(\lambda_{\mathrm{diff}}=1.0\). The jump-Poisson setting uses a
variance-preserving schedule with \(T_{\max}=1000\) and gamma jump
shape/scale 1. The alpha-stable setting clips the positive stable scale at
the 0.999 quantile.

All reported Coinbase configuration files set \texttt{device: "cuda"} and use
single-GPU workers. The reported training runs and checkpoint
artifact repair/validation were carried out on a single NVIDIA GeForce RTX
4090 worker with 24 GB VRAM. The StochLOB
classifier trunk has about 0.53M trainable parameters. Logged per-sample
forward costs for the supervised baselines range from effectively zero for
linear models to about 0.122 GFLOPs for Axial-LOB; DeepLOB and TLOB are logged
at about 0.093 and 0.061 GFLOPs/sample, respectively. The score-table Monte
Carlo preprocessing is performed once per corruption law and schedule and is
reused by the training runs.

\subsection{Crypto Evaluation}

For each symbol and horizon, all baselines use the same chronological
train/validation/test windows, the same feature normalization, and the same
train-calibrated label thresholds. Standard held-out metrics are accuracy and
macro-F1 on the held-out test interval. The main paper reports BTC; the
appendix reports ETH, BNB, XRP, and USDT. Seed-sweep results are reported as
mean and sample standard deviation over seeds \(13,21,42,84,168\).

The realized stress evaluation freezes the trained checkpoint and partitions
held-out BTC windows by statistics computed from the observed test inputs. The
stress score is the maximum one-step level-averaged feature increment divided
by realized volatility; the variance score is the realized variance of those
increments; and the jump-residual score is the residual from regressing the
largest increment on realized variance in log-log space. This keeps the data
split, trained weights, class thresholds, and decision rule fixed while
testing the regimes where stochastic score-matching corruption should matter.

\subsection{Feishu Evaluation}

Feishu uses a two-year in-sample period and one-year out-of-sample release
period. The in-sample period is split chronologically per asset into
train/validation windows. The out-of-sample period is never used for label
thresholds, normalization decisions, checkpoint selection, or trading
threshold selection. Model probabilities are exported day by day and consumed
by the common portfolio engine described in Appendix~\ref{app:feishu-portfolio}.

\section{Training and Inference Pseudocode}
\label{app:algorithms}

Algorithms~\ref{alg:score-table-app}, \ref{alg:training-main}, and
\ref{alg:inference-main} give the operational recipe. The score table is
constructed once per corruption law and schedule; training then uses table
interpolation for each sampled residual norm.

\begin{algorithm}[htbp]
\caption{\normalfont\textsc{TrainNonGaussianLOB}}
\label{alg:training-main}
\small
\begin{algorithmic}[1]
\Require training windows \(\mathcal{D}\); classifier \(f_\theta\); score head
\(s_\theta\); corruption sampler \(p_t(W)\); score table \(h_t\); loss weight
\(\lambda_{\mathrm{diff}}\)
\Ensure checkpoint with best validation macro-F1
\For{epoch \(=1,\dots,E\)}
  \For{minibatch \((x_0,y)\sim\mathcal{D}\)}
    \State compute unperturbed logits \(z_0=f_\theta(x_0)\)
    \State compute \(\mathcal{L}_{\mathrm{cls}}=\CE(z_0,y)\)
    \State sample diffusion step \(t\sim\mathrm{Uniform}\{1,\dots,T_{\max}\}\)
    \State sample variance \(W\sim p_t(W)\) and noise
    \(\xi\sim\mathcal{N}(0,I_d)\)
    \State form \(u=\sqrt{W}\xi\) and \(x_t=a_tx_0+u\)
    \State interpolate \(h_t(\lVert u\rVert)\) from the score table
    \State compute target score \(g^\star=-u\,h_t(\lVert u\rVert)\)
    \State compute
    \(\mathcal{L}_{\mathrm{score}}=
    \lVert s_\theta(x_t,t)-g^\star\rVert^2_{\mathrm{mean}}\)
    \State update \(\theta\) using
    \(\mathcal{L}_{\mathrm{cls}}+
    \lambda_{\mathrm{diff}}\mathcal{L}_{\mathrm{score}}\)
  \EndFor
  \State evaluate validation macro-F1 on unperturbed windows and save the best
  checkpoint
\EndFor
\end{algorithmic}
\end{algorithm}

\begin{algorithm}[htbp]
\caption{\normalfont\textsc{InferStochLOB}}
\label{alg:inference-main}
\small
\begin{algorithmic}[1]
\Require normalized LOB window \(x_0\); trained classifier path \(f_\theta\)
\Ensure class probabilities for down, stationary, and up
\State pass \(x_0\) through the same causal normalization used in training
\State encode the unperturbed window through BiGRU, temporal attention, and
pooling
\State compute trend logits \(z=f_\theta(x_0)\)
\State return \(p(y\mid x_0)=\mathrm{softmax}(z)\)
\Statex \textit{The corruption sampler, score table, score head, and reverse
diffusion process are not used at inference.}
\end{algorithmic}
\end{algorithm}

\section{Additional Crypto Results}
\label{app:results}

The main paper keeps the Coinbase exchange table focused on BTC. This
appendix reports the remaining coins so the reader can see whether the same
story survives beyond one crypto pair. Table~\ref{tab:crypto-altcoin-app}
reports ETH, BNB, XRP, and USDT across all horizons. Within each symbol and
horizon, bold marks the best accuracy or macro-F1 value and underlining marks
the second-best value under the same standard held-out test protocol as the
main table.
A partial rule separates the two proposed non-Gaussian models from the
standard baselines. The broader pattern is deliberately mixed: linear and
attention baselines remain strong on several standard benchmark cells, while
StochLOB (alpha-stable) wins the BNB short-horizon cell and is competitive on ETH,
XRP, and USDT. StochLOB (jump-Poisson) is frequently near the top, consistent
with the main claim that market-shaped score corruption need not sacrifice
standard predictive quality to improve stress-regime behavior.
Table~\ref{tab:perclass-app}
provides the per-class precision, recall, and F1 breakdown for BTC at
$k{=}10$, illustrating how each model distributes errors among the three
trend classes. Realized BTC stress-window diagnostics are reported in the main
paper in Figure~\ref{fig:stress-main} and Table~\ref{tab:stress-extreme-metrics};
the complete stress-regime grid is reported in
Table~\ref{tab:stress-extreme-full-app}.

\begin{table*}[htbp]
\centering
\scriptsize
\setlength{\tabcolsep}{3pt}
\renewcommand{\arraystretch}{1.06}
\resizebox{\textwidth}{!}{%
\begin{tabular}{@{}lclcccccr@{}}
\toprule
Regime & \(k\) & Best heavy-tailed StochLOB & StochLOB F1 & Gaussian F1 &
Best supervised & Gain vs Gaussian & Gain vs supervised & \(n_{\mathrm{tail}}\)\\
\midrule
Stress & 10 & alpha-stable & \textbf{0.712} & 0.703 & TLOB 0.708 & +0.008 & +0.004 & 3194\\
Stress & 20 & alpha-stable & 0.655 & \textbf{0.663} & LinVAR 0.648 & -0.007 & +0.007 & 3193\\
Stress & 50 & jump-Poisson & 0.610 & 0.592 & TLOB \textbf{0.611} & +0.019 & -0.001 & 3190\\
Stress & 100 & alpha-stable & \textbf{0.647} & 0.604 & LogReg 0.614 & +0.043 & +0.034 & 3185\\
\midrule
High variance & 10 & alpha-stable & 0.695 & 0.665 & TLOB \textbf{0.697} & +0.031 & -0.002 & 3194\\
High variance & 20 & jump-Poisson & \textbf{0.686} & 0.680 & LinVAR 0.651 & +0.006 & +0.034 & 3193\\
High variance & 50 & alpha-stable & \textbf{0.658} & 0.603 & LinVAR 0.645 & +0.055 & +0.014 & 3190\\
High variance & 100 & alpha-stable & \textbf{0.659} & 0.613 & LinVAR 0.628 & +0.046 & +0.030 & 3185\\
\midrule
Jump residual & 10 & alpha-stable & \textbf{0.696} & 0.679 & TLOB 0.693 & +0.017 & +0.003 & 3194\\
Jump residual & 20 & jump-Poisson & 0.643 & \textbf{0.651} & BiN-CTABL 0.622 & -0.008 & +0.021 & 3193\\
Jump residual & 50 & jump-Poisson & 0.602 & 0.565 & TLOB \textbf{0.607} & +0.036 & -0.005 & 3190\\
Jump residual & 100 & alpha-stable & \textbf{0.637} & 0.567 & LinVAR 0.597 & +0.070 & +0.039 & 3185\\
\bottomrule
\end{tabular}}
\caption{Complete BTC extreme-window macro-F1 grid underlying
Table~\ref{tab:stress-extreme-metrics}. Each row uses the top 10\% held-out
windows for the named regime and horizon. The table reports the better of the
two heavy-tailed StochLOB variants, the matched Gaussian StochLOB baseline, and
the best supervised comparator in the same tail subset. \textbf{Best F1 in each
row is shown in bold.} Rows with negative gains are retained rather than
filtered out.}
\label{tab:stress-extreme-full-app}
\end{table*}

\begin{table*}[t]
\centering
\tiny
\setlength{\tabcolsep}{2pt}
\renewcommand{\arraystretch}{1.04}
\begin{tabular}{@{}llcccccccc@{}}
\toprule
& & \multicolumn{2}{c}{$k=10$} & \multicolumn{2}{c}{$k=20$} & \multicolumn{2}{c}{$k=50$} & \multicolumn{2}{c}{$k=100$} \\
\cmidrule(lr){3-4} \cmidrule(lr){5-6} \cmidrule(lr){7-8} \cmidrule(l){9-10}
Symbol & Model & Acc & F1 & Acc & F1 & Acc & F1 & Acc & F1 \\
\midrule
ETH & LogReg & 0.6669 & 0.6653 & 0.6304 & 0.6334 & 0.5975 & 0.5999 & 0.5573 & 0.5614\\
ETH & LinVAR & 0.6913 & 0.6876 & 0.6518 & 0.6550 & \textbf{0.6033} & \textbf{0.6074} & 0.5686 & 0.5734\\
ETH & DeepLOB & 0.6615 & 0.6543 & 0.5752 & 0.5754 & 0.5471 & 0.5464 & 0.4793 & 0.4829\\
ETH & BiN-CTABL & 0.6857 & 0.6820 & 0.6467 & 0.6484 & 0.5771 & 0.5792 & 0.5464 & 0.5501\\
ETH & TLOB & 0.6840 & 0.6793 & 0.6389 & 0.6412 & 0.5606 & 0.5624 & \textbf{0.5826} & \textbf{0.5782}\\
ETH & OFSATNet & \underline{0.6987} & \textbf{0.6951} & \underline{0.6538} & \underline{0.6562} & 0.5861 & 0.5875 & \underline{0.5715} & \underline{0.5769}\\
\cmidrule(l){2-10}
ETH & StochLOB (jump-Poisson) & 0.6889 & 0.6851 & 0.6505 & 0.6525 & \underline{0.5992} & \underline{0.6017} & 0.5592 & 0.5640\\
ETH & StochLOB (alpha-stable) & \textbf{0.7001} & \underline{0.6950} & \textbf{0.6573} & \textbf{0.6593} & 0.5915 & 0.5939 & 0.5712 & 0.5737\\
\midrule
BNB & LogReg & 0.6730 & 0.6725 & 0.6320 & 0.6343 & \underline{0.5803} & \underline{0.5837} & \textbf{0.5352} & \textbf{0.5378}\\
BNB & LinVAR & 0.6863 & 0.6855 & \textbf{0.6467} & \textbf{0.6493} & \textbf{0.5822} & \textbf{0.5860} & \underline{0.5334} & \underline{0.5366}\\
BNB & DeepLOB & 0.6405 & 0.6357 & 0.5547 & 0.5533 & 0.4515 & 0.4472 & 0.4020 & 0.3787\\
BNB & BiN-CTABL & 0.6765 & 0.6771 & 0.6068 & 0.6084 & 0.5446 & 0.5469 & 0.4804 & 0.4833\\
BNB & TLOB & 0.6856 & 0.6858 & 0.6247 & 0.6252 & 0.5510 & 0.5481 & 0.5049 & 0.5065\\
BNB & OFSATNet & 0.7012 & 0.7005 & 0.6311 & 0.6327 & 0.5561 & 0.5582 & 0.5039 & 0.5052\\
\cmidrule(l){2-10}
BNB & StochLOB (jump-Poisson) & \underline{0.7019} & \underline{0.7006} & \underline{0.6356} & \underline{0.6367} & 0.5742 & 0.5768 & 0.5086 & 0.5113\\
BNB & StochLOB (alpha-stable) & \textbf{0.7096} & \textbf{0.7086} & 0.6227 & 0.6244 & 0.5660 & 0.5691 & 0.5057 & 0.5093\\
\midrule
XRP & LogReg & 0.6485 & 0.6492 & 0.6167 & 0.6197 & \underline{0.6114} & \underline{0.6162} & 0.5764 & \underline{0.5822}\\
XRP & LinVAR & 0.6686 & 0.6677 & \underline{0.6238} & \underline{0.6268} & \textbf{0.6173} & \textbf{0.6218} & \textbf{0.5870} & \textbf{0.5931}\\
XRP & DeepLOB & 0.6027 & 0.6024 & 0.5292 & 0.5301 & 0.4922 & 0.4868 & 0.4818 & 0.4855\\
XRP & BiN-CTABL & 0.6489 & 0.6496 & 0.6076 & 0.6091 & 0.5729 & 0.5764 & 0.5495 & 0.5519\\
XRP & TLOB & 0.6606 & 0.6587 & 0.6164 & 0.6129 & 0.5759 & 0.5808 & 0.5391 & 0.5445\\
XRP & OFSATNet & \textbf{0.6758} & \textbf{0.6740} & \textbf{0.6290} & \textbf{0.6298} & 0.6092 & 0.6080 & 0.5423 & 0.5484\\
\cmidrule(l){2-10}
XRP & StochLOB (jump-Poisson) & 0.6508 & 0.6515 & 0.6086 & 0.6108 & 0.6052 & 0.6071 & 0.5495 & 0.5550\\
XRP & StochLOB (alpha-stable) & \underline{0.6739} & \underline{0.6731} & 0.6203 & 0.6226 & 0.6010 & 0.6011 & \underline{0.5830} & 0.5809\\
\midrule
USDT & LogReg & 0.6298 & 0.6315 & 0.5956 & 0.5974 & \underline{0.5735} & 0.5716 & \underline{0.5467} & \underline{0.5425}\\
USDT & LinVAR & 0.6339 & 0.6360 & \textbf{0.6023} & \textbf{0.6047} & \textbf{0.5761} & \textbf{0.5738} & \textbf{0.5524} & \textbf{0.5484}\\
USDT & DeepLOB & 0.5903 & 0.5888 & 0.5376 & 0.5363 & 0.5309 & 0.5160 & 0.5211 & 0.5161\\
USDT & BiN-CTABL & 0.6249 & 0.6266 & 0.5768 & 0.5792 & 0.5508 & 0.5504 & 0.5107 & 0.5110\\
USDT & TLOB & 0.6324 & 0.6330 & 0.5817 & 0.5827 & 0.5500 & 0.5535 & 0.5256 & 0.5196\\
USDT & OFSATNet & \underline{0.6364} & \underline{0.6378} & 0.5932 & 0.5944 & 0.5644 & 0.5563 & 0.5320 & 0.5318\\
\cmidrule(l){2-10}
USDT & StochLOB (jump-Poisson) & 0.6283 & 0.6290 & 0.5881 & 0.5884 & 0.5710 & \underline{0.5721} & 0.5329 & 0.5274\\
USDT & StochLOB (alpha-stable) & \textbf{0.6475} & \textbf{0.6493} & \underline{0.5990} & \underline{0.6007} & 0.5635 & 0.5687 & 0.5369 & 0.5401\\
\bottomrule
\end{tabular}
\caption{Additional crypto held-out test Accuracy (Acc) and Macro-F1 (F1)
under the same model scope as the main table. Bold marks the best value
within each symbol and horizon; underlining marks the second-best value.}
\label{tab:crypto-altcoin-app}
\end{table*}

\begin{table}[htbp]
\centering
\small
\begin{tabular}{@{}lccc@{\hspace{4mm}}ccc@{}}
\toprule
& \multicolumn{3}{c}{StochLOB (jump-Poisson)}
& \multicolumn{3}{c}{StochLOB (alpha-stable)}\\
\cmidrule(lr){2-4}\cmidrule(l){5-7}
Class ($k=10$) & P & R & F1 & P & R & F1\\
\midrule
down & 0.7023 & 0.6188 & 0.6579 & 0.7516 & 0.6052 & 0.6705\\
stationary & 0.6685 & 0.7389 & 0.7019 & 0.6646 & 0.7655 & 0.7115\\
up & 0.6805 & 0.6679 & 0.6741 & 0.6805 & 0.6794 & 0.6800\\
\bottomrule
\end{tabular}
\caption{BTC per-class precision (P), recall (R), and F1 at $k=10$.}
\label{tab:perclass-app}
\end{table}

\section{Feishu Portfolio Management Details}
\label{app:feishu-portfolio}

\subsection{Models and Prediction Targets}
\label{app:feishu-model-targets}

Every Feishu model is trained to predict the same three-class next-period
mid-price trend target:
\[
  y\in\{\mathrm{down},\mathrm{flat},\mathrm{up}\}.
\]
The portfolio experiment does not introduce a separate trading model. It uses
the classifier probabilities
\[
  (P(\mathrm{down}),P(\mathrm{flat}),P(\mathrm{up}))
\]
as alpha signals. This makes the downstream transfer test direct: if the
learned probabilities contain useful directional information, a constrained
portfolio rule should be able to convert them into positive out-of-sample
performance. The experiment is therefore a check of probability usability,
not a claim that the paper designs a deployable trading system.

\subsection{Portfolio Construction}
\label{app:feishu-portfolio-setup}

The Feishu portfolio experiment uses the out-of-sample release period only for
the exploratory portfolio evaluation. In-sample data are used as causal
warmup history for rolling features such as returns, volatility, RSI, and the
5-day OFI normalizer. For each model and each trading day $t$, the signal is
computed from the historical window ending at $t-1$. The portfolio rule receives
$(P(\mathrm{down}),P(\mathrm{flat}),P(\mathrm{up}))$ and converts the
probabilities into positions. Assets with sufficiently high up probability
are eligible for buying, while assets with sufficiently high down probability
are eligible for partial selling.

The backtest uses an initial capital of RMB 50 million. The same constraints
are applied to every model: 100-share lot size, commission rate $10^{-4}$,
stamp-duty rate $5\times10^{-4}$ on sells, RMB 5 minimum commission, T+1 sell
lock, a 5\% cash buffer, and a minimum-holdings target of 10 names. These
rules matter for Chinese A-share data. A signal that is statistically useful
but unstable can create high turnover and cost drag; a signal that is too
conservative can fail to trade. The portfolio experiment therefore measures
whether the model output remains usable after realistic market constraints.

The daily rule has three steps. First, held assets are examined for exit. A
held asset is fully sold when $P(\mathrm{down})>0.85$ and the predicted class
is down, subject to the T+1 sell lock and the minimum-holdings constraint. If a
held asset is predicted flat and was not already fully sold, the strategy
sells 50\% of the sellable shares. Second, new buys are selected from assets
with $P(\mathrm{up})>0.75$ and predicted class up. Third, available capital
after the 5\% cash buffer is allocated equally across the selected buy
candidates. Orders are rounded to 100-share lots, buy executions use the
morning VWAP entry price, and sell executions use the configured close-price
sell mode. The model therefore controls which assets are traded, while the
portfolio engine keeps sizing and market rules common across all methods.

We also include a passive equal-weight index. On each out-of-sample day it
holds all valid assets equally and applies the same one-day
VWAP-entry-to-close return convention. It is not an active trading strategy
and it does not pay transaction costs in this implementation. Its purpose is
to show whether a learned signal adds value relative to broad market exposure.

\subsection{Out-of-Sample Portfolio Metrics}
\label{app:feishu-portfolio-results}

The main out-of-sample portfolio metrics are reported in
Table~\ref{tab:feishu-portfolio-main}, and the cumulative-return comparison is
shown in Figure~\ref{fig:feishu-portfolio-main}. We keep the numeric table in
the main text because it is the economic summary most directly connected to
the experimental story. This appendix gives the construction details needed to
interpret those numbers.

The most important implementation point is that the portfolio engine is shared
across models. A model can improve the backtest only by changing the
probability triplet used for buy, hold, and sell decisions; it cannot change
position sizing, execution prices, transaction-cost assumptions, or market
constraints. In the reported run, several evaluated architectures produced
mostly flat predictions during the out-of-sample window and therefore did not
produce meaningful active portfolios under the common thresholds. These
inactive runs are omitted from Table~\ref{tab:feishu-portfolio-main} so that
the main comparison focuses on strategies that actually trade.

\section{Posterior-Precision Interpretation of \texorpdfstring{\(h_t\)}{h-t}}
\label{app:h-score}

\begin{proposition}[Posterior-precision interpretation for isotropic Gaussian scale mixtures]
\label{prop:ht-posterior-precision}
Fix a noise index $t$. Suppose $W_t>0$ almost surely and
\[
    \mathbb E\!\left[W_t^{-d/2-2}\right] < \infty.
\]
Let $r=\lVert u\rVert$. Then
\[
    h_t(r)
    = \mathbb E\!\left[W_t^{-1}\mid u\right]
    = \mathbb E\!\left[W_t^{-1}\mid \lVert u\rVert=r\right].
\]
Moreover,
\[
    h_t'(r)
    =-r\,\operatorname{Var}\!\left(
        W_t^{-1}\mid \lVert u\rVert=r
    \right)
    \le 0.
\]
Hence $h_t(r)$ is the posterior expected corruption precision and is
non-increasing in the residual magnitude. In particular, residual adaptation
is a property of the random scale-mixture family itself: it is absent when
$W_t$ is deterministic and is strict when the scale distribution is
non-degenerate.
\end{proposition}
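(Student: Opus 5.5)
The plan is to write everything through the joint law of $(W_t,u)$ and apply Bayes' rule directly. Conditional on $W_t=w$, the residual has density $(2\pi w)^{-d/2}\exp(-\lVert u\rVert^2/2w)$. The posterior of $W_t$ given $u$ is therefore
\[
\pi_t(dw\mid u)\propto w^{-d/2}\exp\!\left(-\frac{\lVert u\rVert^2}{2w}\right)p_t(dw).
\]
This depends on $u$ only through $r=\lVert u\rVert$. Two consequences follow:
\[
\mathbb E[W_t^{-1}\mid u]=\mathbb E[W_t^{-1}\mid \lVert u\rVert=r],
\]
and this common value equals the ratio in Eq.~\eqref{eq:radial-score}. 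The second equality uses the fact that $\sigma(\lVert u\rVert)\subset\sigma(u)$ and the tower property. This is the same representation as the target $\pi_t(\cdot\mid r)$ written in Appendix~\ref{app:gsm-monte-carlo}. The normalizer is $D_t(r)>0$ for $r>0$, since $W_t>0$ a.s.

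For $r=0$, and to make the ratios well defined, I will use the moment hypothesis. Because $\mathbb E[W_t^{-d/2-2}]<\infty$, Jensen/Lyapunov gives finiteness of $\mathbb E[W_t^{-d/2-k}]$ for $k=0,1,2$, whenever the small-$W$ tail is the issue. The large-$W$ part is bounded by the indicator bound $W^{-a}\le 1$ on $\{W\ge1\}$. So all of $N_t$, $D_t$, and $\mathbb E[W^{-2}g_2]$ are finite uniformly in $r\ge0$. This is where the hypothesis enters, replacing the $r$-dependent bounds of Lemma~\ref{lem:gsm-mc-bounds}, which blow up at $0$.

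For the monotonicity, set $s=r^2/2$ and write
\[
D(s)=\mathbb E\bigl[W^{-d/2}e^{-s/W}\bigr],\qquad N(s)=\mathbb E\bigl[W^{-d/2-1}e^{-s/W}\bigr].
\]
Then $h=N/D$. I will differentiate under the expectation in $s$, justified by dominated convergence. The derivatives bring down factors of $-W^{-1}$, and the dominating functions are $W^{-d/2-1}$ and $W^{-d/2-2}$, which are integrable by the hypothesis. This gives $D'=-N$ and $N'=-\mathbb E[W^{-d/2-2}e^{-s/W}]$. Hence
\[
\frac{dh}{ds}=\frac{N'D-ND'}{D^2}=-\Bigl(\mathbb E_\pi[W^{-2}]-\mathbb E_\pi[W^{-1}]^2\Bigr)=-\operatorname{Var}_\pi(W^{-1}),
\]
where $\pi=\pi_t(\cdot\mid r)$. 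The chain rule with $ds/dr=r$ then gives $h_t'(r)=-r\operatorname{Var}(W_t^{-1}\mid\lVert u\rVert=r)\le0$.

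The final claims follow from the form of the posterior. If $W_t$ is deterministic, the posterior is a point mass, so the variance vanishes and $h_t\equiv 1/\sigma_t^2$. If $p_t$ is non-degenerate, then $\pi_t(\cdot\mid r)$ is mutually absolutely continuous with $p_t$, because the tilt factor is strictly positive. So $W^{-1}$ is non-degenerate under $\pi$, the variance is strictly positive, and $h_t$ is strictly decreasing on $r>0$.

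I expect the main obstacle to be the differentiation under the integral near $W\to0$, together with the endpoint $r=0$. This is exactly the step that needs $\mathbb E[W^{-d/2-2}]<\infty$, and the domination must be written carefully so that it is uniform on neighborhoods of each $s\ge0$. The factor $e^{-s/W}\le1$ makes this straightforward once the moment bound is available.
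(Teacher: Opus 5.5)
Your proof is correct and follows essentially the same route as the paper. For the first identity you both use Bayes' rule with the Gaussian likelihood $w^{-d/2}e^{-r^2/2w}$ as a tilt of $p_t$, which depends on $u$ only through $r$. For monotonicity you both substitute $s=r^2/2$ and differentiate under the expectation, justified by $\mathbb E[W_t^{-d/2-2}]<\infty$, to get $\partial_s h_t=-\operatorname{Var}(W_t^{-1}\mid\lVert u\rVert=r)$.

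The differences are minor and in your favour. You read $h_t$ directly off its defining ratio instead of re-deriving $\nabla_u\log q_t$ by differentiating in $u$. Your quotient-rule step is the paper's $\partial_s^2\log Z_t$ computation in different notation. Your mutual-absolute-continuity argument makes rigorous the paper's informal ``the posterior cannot collapse'' justification of strict decrease.
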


\begin{proof}
For fixed $w$, write
\[
    \phi_w(u)
    =\mathcal N(u;0,wI_d)
    =(2\pi w)^{-d/2}
      \exp\!\left(-\frac{\lVert u\rVert^2}{2w}\right).
\]
The component score is
\[
    \nabla_u\log\phi_w(u)=-\frac{u}{w},
    \qquad\text{hence}\qquad
    \nabla_u\phi_w(u)=-\frac{u}{w}\phi_w(u).
\]
Since
\[
    q_t(u)=\mathbb E\!\left[\phi_{W_t}(u)\right],
\]
the condition
\[
    \mathbb E\!\left[W_t^{-d/2-2}\right] < \infty
\]
provides sufficient integrability to interchange differentiation and
expectation in the expressions below. Therefore,
\[
    \nabla_u q_t(u)
    =
    -u\,\mathbb E\!\left[W_t^{-1}\phi_{W_t}(u)\right].
\]
Dividing by $q_t(u)$ gives
\[
    \nabla_u\log q_t(u)
    =
    -u\,
    \frac{
        \mathbb E\!\left[W_t^{-1}\phi_{W_t}(u)\right]
    }{
        \mathbb E\!\left[\phi_{W_t}(u)\right]
    }.
\]
By Bayes' rule in expectation form,
\[
    \mathbb E\!\left[f(W_t)\mid u\right]
    =
    \frac{
        \mathbb E\!\left[f(W_t)\phi_{W_t}(u)\right]
    }{
        \mathbb E\!\left[\phi_{W_t}(u)\right]
    }
\]
for any integrable function $f$. Taking $f(w)=w^{-1}$ therefore gives
\[
    \nabla_u\log q_t(u)
    =
    -u\,\mathbb E\!\left[W_t^{-1}\mid u\right].
\]
Comparing this identity with
\[
    \nabla_u\log q_t(u)
    =-u\,h_t(\lVert u\rVert)
\]
proves the first equality. Because the Gaussian components are isotropic,
the posterior depends on $u$ only through $r=\lVert u\rVert$, giving
\[
    h_t(r)
    =\mathbb E\!\left[
        W_t^{-1}\mid \lVert u\rVert=r
    \right].
\]

To establish monotonicity, let
\[
    s=\frac{r^2}{2}
\]
and define
\[
    Z_t(s)
    =
    \mathbb E\!\left[
        W_t^{-d/2}
        \exp\!\left(-\frac{s}{W_t}\right)
    \right].
\]
Since
\[
    q_t(u)=(2\pi)^{-d/2}Z_t(s),
\]
Eq.~\eqref{eq:radial-score} can be written as
\[
    h_t(r)
    =
    -\frac{\partial}{\partial s}\log Z_t(s).
\]
Indeed,
\[
    -\frac{\partial}{\partial s}\log Z_t(s)
    =
    \frac{
        \mathbb E\!\left[
            W_t^{-d/2-1}\exp(-s/W_t)
        \right]
    }{
        \mathbb E\!\left[
            W_t^{-d/2}\exp(-s/W_t)
        \right]
    }
    =
    h_t(r).
\]

The same Bayes weighting gives, for $k=1,2$,
\[
    \mathbb E\!\left[W_t^{-k}\mid \lVert u\rVert=r\right]
    =
    \frac{
        \mathbb E\!\left[
            W_t^{-d/2-k}\exp(-s/W_t)
        \right]
    }{
        Z_t(s)
    }.
\]
Differentiating $Z_t$ with respect to $s$ gives
\[
    Z_t'(s)
    =
    -\mathbb E\!\left[
        W_t^{-d/2-1}\exp(-s/W_t)
    \right]
\]
and
\[
    Z_t''(s)
    =
    \mathbb E\!\left[
        W_t^{-d/2-2}\exp(-s/W_t)
    \right].
\]
Hence
\[
    -\frac{Z_t'(s)}{Z_t(s)}
    =
    \mathbb E\!\left[
        W_t^{-1}\mid \lVert u\rVert=r
    \right]
\]
and
\[
    \frac{Z_t''(s)}{Z_t(s)}
    =
    \mathbb E\!\left[
        W_t^{-2}\mid \lVert u\rVert=r
    \right].
\]
Now,
\[
    \frac{\partial^2}{\partial s^2}\log Z_t(s)
    =
    \frac{Z_t''(s)}{Z_t(s)}
    -
    \left(
        \frac{Z_t'(s)}{Z_t(s)}
    \right)^2,
\]
so
\[
    \frac{\partial^2}{\partial s^2}\log Z_t(s)
    =
    \operatorname{Var}\!\left(
        W_t^{-1}\mid \lVert u\rVert=r
    \right).
\]
Since
\[
    h_t(r)
    =
    -\frac{\partial}{\partial s}\log Z_t(s),
\]
it follows that
\[
    \frac{\partial h_t}{\partial s}
    =
    -\operatorname{Var}\!\left(
        W_t^{-1}\mid \lVert u\rVert=r
    \right)
    \le 0.
\]
Finally, using $s=r^2/2$,
\[
    h_t'(r)
    =
    r\frac{\partial h_t}{\partial s}
    =
    -r\,\operatorname{Var}\!\left(
        W_t^{-1}\mid \lVert u\rVert=r
    \right)
    \le 0.
\]
If $W_t$ is non-degenerate, the Gaussian likelihood is strictly positive
for every $W_t>0$, so posterior weighting cannot collapse the scale to a
single value. The posterior variance is therefore strictly positive and
$h_t'(r)<0$ for $r>0$. Thus $h_t$ is strictly decreasing for every
genuinely random scale mixture.
\end{proof}

Proposition~\ref{prop:ht-posterior-precision} provides an intuitive
interpretation of the function $h_t$ and, consequently, of the score target
in Eq.~\eqref{eq:score-loss-main}. In particular, it shows that $h_t(r)$ is
the posterior expected corruption precision,
$\mathbb E[W_t^{-1}\mid \lVert u\rVert=r]$, given the observed residual
magnitude. Gaussian scale mixtures are a classical latent-variable
construction for symmetric non-Gaussian and heavy-tailed distributions
\citep{andrews1974scale,west1987scale}. The identity that a marginal score
is obtained by posterior averaging of latent component scores is a standard
latent-variable score identity \citep{louis1982observed}; in score-based
terms, it is closely connected to the conditional-score identities
underlying denoising score matching \citep{vincent2011dsm} and modern
score-based diffusion/SDE formulations \citep{song2021sde}. The
posterior-precision interpretation is also related in spirit to
empirical-Bayes score identities such as Tweedie's formula
\citep{efron2011tweedie}; here, however, the latent quantity being inferred
is the corruption precision $W_t^{-1}$ rather than an unknown clean mean.
The result is not specific to one non-Gaussian corruption law: it holds for
the whole isotropic Gaussian scale-mixture family.
Proposition~\ref{prop:score-uncertainty} gives a complementary uncertainty
interpretation of the same latent-scale posterior, after which the
corollaries separate the family-level cases that are relevant to the
corruptions used in StochLOB.

\begin{proposition}[Posterior uncertainty of the latent component score]
\label{prop:score-uncertainty}
Under the conditions of Proposition~\ref{prop:ht-posterior-precision}, define
the score of the Gaussian component associated with the latent scale $W_t$ as
\[
    S_t(u,W_t)
    :=\nabla_u\log\phi_{W_t}(u)
    =-W_t^{-1}u.
\]
Then the marginal score is the posterior mean of this latent component score,
\[
    \mathbb E\!\left[S_t(u,W_t)\mid u\right]
    =\nabla_u\log q_t(u)
    =-u\,h_t(r),
    \qquad r=\lVert u\rVert,
\]
and its posterior covariance is
\[
    \operatorname{Cov}\!\left(S_t(u,W_t)\mid u\right)
    =\operatorname{Var}\!\left(
        W_t^{-1}\mid \lVert u\rVert=r
    \right)uu^\top.
\]
Consequently, define the scalar posterior score uncertainty by
\[
    \mathcal U_t(r)
    :=\operatorname{tr}\!\left[
        \operatorname{Cov}\!\left(S_t(u,W_t)\mid u\right)
    \right].
\]
Then
\[
    \mathcal U_t(r)
    =r^2\operatorname{Var}\!\left(
        W_t^{-1}\mid \lVert u\rVert=r
    \right)
    =-r\,h_t'(r)
    \ge 0.
\]
Thus the same posterior uncertainty in corruption precision that makes $h_t$
residual-adaptive also measures uncertainty about the component-level
restoring score.
\end{proposition}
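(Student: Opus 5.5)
The plan is to work entirely with the Bayes-weighted posterior of $W_t$ given $u$ used in the proof of Proposition~\ref{prop:ht-posterior-precision}, namely
\[
\pi_t(dw\mid u)=\frac{\phi_w(u)\,p_t(dw)}{q_t(u)},
\]
and to exploit the fact that the component score $S_t(u,W_t)=-W_t^{-1}u$ is linear in the scalar $W_t^{-1}$ with the fixed vector $u$. The integrability assumption $\mathbb E[W_t^{-d/2-2}]<\infty$ guarantees that $W_t^{-1}$ and $W_t^{-2}$ are integrable under this posterior. Indeed, $\phi_w(u)\le(2\pi w)^{-d/2}$, so the moments $\mathbb E[W_t^{-k}\phi_{W_t}(u)]$ are finite for $k=1,2$. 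Hence the conditional mean and covariance below are well defined.

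\textbf{Step 1 (posterior mean).} Since $u$ is fixed under conditioning,
\[
\mathbb E[S_t(u,W_t)\mid u]=-u\,\mathbb E[W_t^{-1}\mid u].
\]
Proposition~\ref{prop:ht-posterior-precision} gives $\mathbb E[W_t^{-1}\mid u]=h_t(r)$, and the identity $\nabla_u\log q_t(u)=-u\,h_t(r)$ was established there. This yields the first display.

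\textbf{Step 2 (posterior covariance).} For a random vector of the form $-Xu$ with $X$ scalar and $u$ deterministic,
\[
\operatorname{Cov}(-Xu)=\operatorname{Var}(X)\,uu^\top.
\]
Apply this with $X=W_t^{-1}$ under $\pi_t(\cdot\mid u)$. Explicitly, I would expand $\mathbb E[SS^\top\mid u]-\mathbb E[S\mid u]\mathbb E[S\mid u]^\top$ to get
\[
\bigl(\mathbb E[W_t^{-2}\mid u]-\mathbb E[W_t^{-1}\mid u]^2\bigr)\,uu^\top .
\]
Isotropy, as already noted in Proposition~\ref{prop:ht-posterior-precision}, means the posterior depends on $u$ only through $r$. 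So the variance equals $\operatorname{Var}(W_t^{-1}\mid\lVert u\rVert=r)$.

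\textbf{Step 3 (trace).} Since $\operatorname{tr}(uu^\top)=\lVert u\rVert^2=r^2$, we get
\[
\mathcal U_t(r)=r^2\operatorname{Var}(W_t^{-1}\mid\lVert u\rVert=r).
\]
The derivative formula of Proposition~\ref{prop:ht-posterior-precision},
\[
h_t'(r)=-r\operatorname{Var}(W_t^{-1}\mid\lVert u\rVert=r),
\]
then gives $\mathcal U_t(r)=-r\,h_t'(r)$. Nonnegativity follows because a variance is nonnegative, or equivalently from $h_t'\le0$ and $r\ge0$.

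\textbf{Main obstacle.} The only delicate point is justifying the finiteness of the second posterior moment, which is what makes the covariance meaningful. It is handled by the bound $\phi_w(u)\le(2\pi w)^{-d/2}$ together with the moment hypothesis. Alternatively, for $r>0$ one can use the uniform bound of Lemma~\ref{lem:gsm-mc-bounds} applied to $w^{-2-d/2}e^{-r^2/2w}$. Everything else is linear algebra on top of the earlier proposition.
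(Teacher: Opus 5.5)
Your proposal is correct and follows the paper's proof essentially step for step. You use linearity of $S_t(u,W_t)=-W_t^{-1}u$ in the scalar $W_t^{-1}$ for the posterior mean, $\operatorname{Cov}(-Xu)=\operatorname{Var}(X)\,uu^\top$ together with isotropy for the covariance, $\operatorname{tr}(uu^\top)=r^2$, and the derivative identity of Proposition~\ref{prop:ht-posterior-precision} to get $\mathcal U_t(r)=-r\,h_t'(r)\ge 0$. Your explicit check that the posterior second moment of $W_t^{-1}$ is finite, via $\phi_w(u)\le(2\pi w)^{-d/2}$ and the moment hypothesis, is a valid small addition that the paper leaves implicit.
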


\begin{proof}
By the component-score identity above,
\[
    S_t(u,W_t)=-W_t^{-1}u.
\]
Conditioning on the observed residual $u$ and applying
Proposition~\ref{prop:ht-posterior-precision} gives
\[
    \mathbb E\!\left[S_t(u,W_t)\mid u\right]
    =-u\,\mathbb E\!\left[W_t^{-1}\mid u\right]
    =-u\,h_t(r)
    =\nabla_u\log q_t(u).
\]
Since $u$ is fixed under this conditional distribution,
\begin{align*}
    \operatorname{Cov}\!\left(S_t(u,W_t)\mid u\right)
    &=\mathbb E\!\left[
        \bigl(W_t^{-1}-h_t(r)\bigr)^2\mid u
      \right]uu^\top\\
    &=\operatorname{Var}\!\left(W_t^{-1}\mid u\right)uu^\top\\
    &=\operatorname{Var}\!\left(
        W_t^{-1}\mid \lVert u\rVert=r
      \right)uu^\top,
\end{align*}
where the last equality again follows from isotropy. Taking the trace and
using
\[
    \operatorname{tr}(uu^\top)
    =\lVert u\rVert^2
    =r^2
\]
gives
\[
    \mathcal U_t(r)
    =r^2\operatorname{Var}\!\left(
        W_t^{-1}\mid \lVert u\rVert=r
    \right).
\]
Finally, Proposition~\ref{prop:ht-posterior-precision} gives
\[
    h_t'(r)
    =-r\,\operatorname{Var}\!\left(
        W_t^{-1}\mid \lVert u\rVert=r
    \right),
\]
so
\[
    \mathcal U_t(r)=-r\,h_t'(r)\ge 0.
\]
This is posterior uncertainty induced by the latent corruption scale $W_t$;
it does not represent parameter uncertainty in the learned score network.
\end{proof}

\begin{corollary}[Deterministic-scale corruption]
\label{cor:ht-deterministic}
Suppose $W_t=w_t$ deterministically for some $w_t>0$. Then
\[
    h_t(r)=\frac{1}{w_t},
    \qquad
    h_t'(r)=0,
\]
and
\[
    \nabla_u\log q_t(u)=-\frac{u}{w_t}.
\]
Thus deterministic-scale Gaussian corruption uses a fixed restoring
precision, independent of the residual magnitude. StochLOB (Gaussian) is
the particular case $w_t=\sigma_t^2$.
\end{corollary}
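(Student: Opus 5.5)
The plan is to read the corollary off Proposition~\ref{prop:ht-posterior-precision}, and to back that up with a direct computation from Eq.~\eqref{eq:radial-score} that needs no integrability assumption.

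\textbf{Direct computation.} When $W_t=w_t$ almost surely, the law $p_t$ is the point mass $\delta_{w_t}$. Both expectations in Eq.~\eqref{eq:radial-score} are then evaluations at $w_t$:
\[
h_t(r)=\frac{w_t^{-1}w_t^{-d/2}\exp(-r^2/2w_t)}{w_t^{-d/2}\exp(-r^2/2w_t)}=\frac{1}{w_t}.
\]
This holds for every $r\ge 0$, because the common factor is strictly positive and cancels. The value is constant in $r$, so $h_t'(r)=0$. Substituting into Eq.~\eqref{eq:gsm-score-main} gives $\nabla_u\log q_t(u)=-u/w_t$.

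\textbf{Cross-check via the density.} Equivalently, $q_t$ is exactly the $\mathcal N(0,w_tI_d)$ density. Its log-gradient is $-u/w_t$ by the component-score identity in the proof of Proposition~\ref{prop:ht-posterior-precision}.

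\textbf{Consistency with the proposition.} A degenerate $W_t$ satisfies the hypotheses of Proposition~\ref{prop:ht-posterior-precision}: $W_t>0$, and $\mathbb E[W_t^{-d/2-2}]=w_t^{-d/2-2}<\infty$. The proposition therefore applies. Its posterior $\pi_t(\cdot\mid r)$ is again $\delta_{w_t}$, since reweighting a point mass by a positive likelihood leaves it unchanged. This gives
\[
\mathbb E[W_t^{-1}\mid\lVert u\rVert=r]=1/w_t
\quad\text{and}\quad
\operatorname{Var}(W_t^{-1}\mid\lVert u\rVert=r)=0.
\]
The identity $h_t'(r)=-r\operatorname{Var}(\cdot)$ then returns $h_t'(r)=0$, matching the direct computation.

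\textbf{Gaussian special case.} The StochLOB (Gaussian) statement follows by setting $w_t=\sigma_t^2$, as in Eq.~\eqref{eq:gsm-main}, where Gaussian diffusion is the case $W_t=\sigma_t^2$.

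There is no real obstacle here. The only point of care is that the cancellation is legitimate at every $r$, including $r=0$. It is, because $w_t>0$ makes the exponential factor strictly positive; unlike the Monte Carlo bounds of Lemma~\ref{lem:gsm-mc-bounds}, nothing diverges at $r=0$.
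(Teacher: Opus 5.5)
Your proof is correct. Your ``consistency with the proposition'' paragraph is essentially the paper's entire argument. The paper observes that $W_t^{-1}=w_t^{-1}$ almost surely and reads off $h_t(r)=\E[W_t^{-1}\mid\lVert u\rVert=r]=w_t^{-1}$ from Proposition~\ref{prop:ht-posterior-precision}. It then gets $h_t'(r)=0$ from constancy, without the variance formula, and substitutes into Eq.~\eqref{eq:gsm-score-main}.

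Your primary route is more elementary. You evaluate the defining ratio in Eq.~\eqref{eq:radial-score} at the point mass $\delta_{w_t}$, and you note that $q_t$ is exactly the $\mathcal N(0,w_tI_d)$ density, which gives the score directly. This is logically independent of the proposition: its integrability hypothesis and the interchange of differentiation and expectation play no role. It also makes validity at $r=0$ transparent.

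What the paper's route buys is conceptual placement. The deterministic case appears as the degenerate-posterior endpoint of the same posterior-precision mechanism that gives strict residual adaptation in Corollary~\ref{cor:ht-random-scale}. Your variance cross-check exhibits this as well.
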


\begin{proof}
If $W_t=w_t$ almost surely, then $W_t^{-1}=w_t^{-1}$ almost surely.
Proposition~\ref{prop:ht-posterior-precision} therefore gives
\[
    h_t(r)
    =
    \mathbb E\!\left[W_t^{-1}\mid \lVert u\rVert=r\right]
    =
    w_t^{-1},
\]
so $h_t'(r)=0$. Substituting into Eq.~\eqref{eq:gsm-score-main} gives the score expression.
\end{proof}

\begin{corollary}[Non-degenerate random-scale corruption]
\label{cor:ht-random-scale}
Suppose the scale distribution of $W_t$ is non-degenerate. Then, for every
$r>0$,
\[
    h_t'(r)<0.
\]
Hence every genuinely random isotropic Gaussian scale mixture has a
strictly residual-adaptive posterior precision: as the residual magnitude
increases, the per-unit restoring precision decreases. The jump-Poisson and
alpha-stable scale-mixture corruptions used in StochLOB are examples of this
random-scale family.
\end{corollary}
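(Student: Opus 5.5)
The corollary follows from the derivative identity in Proposition~\ref{prop:ht-posterior-precision},
\[
  h_t'(r)=-r\,\operatorname{Var}\!\left(W_t^{-1}\mid \lVert u\rVert=r\right).
\]
Since $r>0$, it suffices to show that the posterior variance of $W_t^{-1}$ is strictly positive for every $r>0$ whenever the prior law $p_t$ of $W_t$ is non-degenerate. I will take the integrability hypothesis of Proposition~\ref{prop:ht-posterior-precision} as given. For the concrete laws in the paper I will check it separately. For the jump-Poisson law, $W_t\ge\sigma_t^2>0$, so all negative moments are finite. For the clipped stable law, $W_t$ can approach zero, so finiteness of $\E[W_t^{-d/2-2}]$ needs an argument. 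The simplest route is to note that the posterior identity and the variance formula need only the quantities $\E[W^{-d/2-k}e^{-s/W}]$ for $k\le 2$ and $s>0$. By Lemma~\ref{lem:gsm-mc-bounds}, each integrand $w^{-d/2-k}e^{-s/w}$ is bounded uniformly in $w>0$, so these quantities are finite. Dominated convergence then justifies differentiating in $s$ locally on $(0,\infty)$, which is all the corollary needs.

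\textbf{Posterior law.} Fix $r>0$ and set $s=r^2/2$. The posterior is
\[
  \pi(dw\mid r)\propto w^{-d/2}e^{-s/w}\,p_t(dw).
\]
Its density with respect to $p_t$ is strictly positive and finite at every $w>0$, so $\pi$ and $p_t$ are mutually absolutely continuous on $(0,\infty)$. Two consequences follow. First, a set has positive $p_t$-mass if and only if it has positive $\pi$-mass. Second, the random variable $W^{-1}$ is $\pi$-a.s.\ constant if and only if it is $p_t$-a.s.\ constant.

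\textbf{Strict positivity of the variance.} Non-degeneracy of $p_t$ means $W_t$ is not a.s.\ constant. Since $w\mapsto w^{-1}$ is injective on $(0,\infty)$, $W_t^{-1}$ is not $p_t$-a.s.\ constant either. By the equivalence above, it is not $\pi$-a.s.\ constant. Its $\pi$-variance is finite by the bounds just discussed, so it is strictly positive. Combining this with the derivative identity gives $h_t'(r)<0$ for all $r>0$.

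\textbf{The two corruption laws.} It remains to confirm that both laws are non-degenerate. For the jump-Poisson law,
\[
  \mathbb P(W_t=\sigma_t^2)=\mathbb P(N_t=0)=e^{-\Lambda_t}<1
\]
whenever $\Lambda_t>0$, and on $\{N_t\ge1\}$ we have $W_t>\sigma_t^2$ a.s.\ because the Gamma jumps are continuous and positive. For the clipped stable law, $A$ has a continuous positive density below the clip level $Q_{0.999}(A)$, so $W_t=\sigma_t^2\min(A,Q_{0.999}(A))$ takes a continuum of values with positive probability and is not constant.

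\textbf{Main obstacle.} The mutual absolute continuity step is elementary. The step needing care is the regularity of the clipped stable law near zero. Rather than asserting $\E[W_t^{-d/2-2}]<\infty$ globally, I would justify the needed differentiation at each fixed $s>0$ through the uniform bounds of Lemma~\ref{lem:gsm-mc-bounds} and dominated convergence.
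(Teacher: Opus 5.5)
Your proposal is correct and matches the paper's proof: both combine the identity $h_t'(r)=-r\operatorname{Var}(W_t^{-1}\mid\lVert u\rVert=r)$ from Proposition~\ref{prop:ht-posterior-precision} with the fact that the strictly positive Gaussian likelihood keeps the posterior of $W_t^{-1}$ non-degenerate, and your mutual-absolute-continuity argument states that fact precisely. Your additions are sound refinements of what the paper states in one line or defers to the discussion after the corollaries. These are local domination via the uniform bounds of Lemma~\ref{lem:gsm-mc-bounds} instead of the global moment condition, and explicit non-degeneracy checks for the jump-Poisson and clipped stable laws. The global condition $\E[W_t^{-d/2-2}]<\infty$ in fact also holds for the clipped stable law, because positive-stable laws have finite negative moments of every order.
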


\begin{proof}
By Proposition~\ref{prop:ht-posterior-precision},
\[
    h_t'(r)
    =
    -r\,\operatorname{Var}\!\left(
        W_t^{-1}\mid \lVert u\rVert=r
    \right).
\]
For a non-degenerate scale distribution, the posterior distribution of
$W_t^{-1}$ remains non-degenerate for every finite residual because the
Gaussian likelihood is strictly positive for every $W_t>0$. Hence the
posterior variance is strictly positive, and $h_t'(r)<0$ for $r>0$.
\end{proof}

The resulting reduction in effective precision for larger residuals is
consistent with the classical robustness interpretation of heavy-tailed
scale-mixture models, such as Student-$t$ error models, in which extreme
residuals exert less influence than under Gaussian errors
\citep{lange1989robust}. Corollary~\ref{cor:ht-random-scale} strengthens this
intuition for the present setting by establishing strict residual adaptation
for every non-degenerate isotropic Gaussian scale mixture satisfying the
proposition's regularity conditions.

\begin{corollary}[Bounded random-scale corruption]
\label{cor:ht-bounded-scale}
Suppose the scale distribution is non-degenerate and satisfies
\[
    0<W_t\le W_{t,\max}
\]
almost surely for some finite $W_{t,\max}$. Then
\[
    h_t(r)\ge \frac{1}{W_{t,\max}},
\]
and, for $r>0$,
\[
    h_t'(r)<0.
\]
Thus bounded random-scale corruption retains residual adaptation but has a
positive precision floor. If the scale distribution also places positive
probability at $W_{t,\max}$, then
\[
    h_t(r)
    \longrightarrow
    \frac{1}{W_{t,\max}}
    \qquad
    \text{as }r\to\infty.
\]
\end{corollary}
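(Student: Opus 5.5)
The plan is to work directly from the posterior-precision representation in Proposition~\ref{prop:ht-posterior-precision}, $h_t(r)=\mathbb E[W_t^{-1}\mid \lVert u\rVert=r]$. The integrability hypothesis of that proposition holds automatically whenever $W_t$ is also bounded away from zero, as for the jump-Poisson law. For the clipped stable law I would assume it as stated in the corollary's implicit "conditions of Proposition~\ref{prop:ht-posterior-precision}". Taking this as given, the three claims will be handled in order: the floor, the strict monotonicity, and the limit.

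\textbf{Floor.} For the floor, note that $W_t\le W_{t,\max}$ almost surely implies $W_t^{-1}\ge 1/W_{t,\max}$ almost surely under the prior. The posterior $\pi_t(dw\mid r)\propto g_2(w,r)\,p_t(dw)$ is absolutely continuous with respect to the prior, since $g_2>0$. So the same inequality holds almost surely under the posterior. Taking posterior expectations gives $h_t(r)\ge 1/W_{t,\max}$. \textbf{Monotonicity.} Strict monotonicity for $r>0$ is then exactly Corollary~\ref{cor:ht-random-scale}, because the scale law is non-degenerate; nothing new is needed beyond citing it.

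\textbf{Limit.} For the limit, write $p_0=\mathbb P(W_t=W_{t,\max})>0$ and $s=r^2/2$. Split the expectation in $h_t$ into the atom $\{W_t=W_{t,\max}\}$ and the remainder $\{W_t<W_{t,\max}\}$. It suffices to show that the posterior mass of the remainder tends to zero as $s\to\infty$; then $h_t(r)-1/W_{t,\max}$ can be bounded. Concretely, factor out $e^{-s/W_{t,\max}}$ from numerator and denominator of Eq.~\eqref{eq:radial-score}. For $w<W_{t,\max}$ the relative weight is
\[
  w^{-d/2}\exp\!\bigl(-s(1/w-1/W_{t,\max})\bigr),
\]
which tends to $0$ pointwise as $s\to\infty$. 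The denominator, after factoring, is at least $p_0W_{t,\max}^{-d/2}>0$. The numerator's remainder part, weighted additionally by $w^{-1}$, is what must be shown to vanish.

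The main obstacle is domination near $w=0$, where $w^{-d/2-1}$ blows up; the clipped stable law may put mass near zero. I would handle this with Lemma~\ref{lem:gsm-mc-bounds}-style reasoning. Note $1/w-1/W_{t,\max}\ge 1/(2w)$ for $w\le W_{t,\max}/2$. So on that region the relative weight times $w^{-1}$ is at most $w^{-d/2-1}e^{-s/(2w)}$. This is uniformly bounded in $w$ for $s\ge 1$, and it decreases to $0$ pointwise. On $[W_{t,\max}/2,W_{t,\max})$ the integrand is trivially bounded by a constant. Dominated convergence then gives that the remainder contributions vanish. Hence $h_t(r)\to (p_0 W_{t,\max}^{-d/2-1})/(p_0W_{t,\max}^{-d/2})=1/W_{t,\max}$.
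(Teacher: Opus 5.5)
Your proposal is correct and follows the paper's own route. The floor comes from $W_t^{-1}\ge W_{t,\max}^{-1}$ holding under the posterior, the strict decrease comes from citing Corollary~\ref{cor:ht-random-scale}, and the limit comes from the likelihood ratio of each $w<W_{t,\max}$ against the maximal-scale atom tending to zero. Your dominated-convergence step is an improvement: you bound the weights by $w^{-a}e^{-s/(2w)}$ uniformly for $s\ge 1$ near $w=0$ and by a constant on $[W_{t,\max}/2,W_{t,\max})$, for both the numerator and denominator remainders, and this justifies the passage from pointwise decay of the ratio to convergence of the posterior expectation, which the paper only asserts.
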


\begin{proof}
Since $W_t\le W_{t,\max}$, taking reciprocals gives
\[
    W_t^{-1}\ge W_{t,\max}^{-1}.
\]
Therefore, by Proposition~\ref{prop:ht-posterior-precision},
\[
    h_t(r)
    =
    \mathbb E\!\left[
        W_t^{-1}\mid \lVert u\rVert=r
    \right]
    \ge
    \frac{1}{W_{t,\max}}.
\]
Strict decrease follows from Corollary~\ref{cor:ht-random-scale}.

For the limiting statement, assume the scale distribution places positive
probability at $W_{t,\max}$. For any smaller scale $w<W_{t,\max}$, the
Gaussian likelihood ratio relative to the maximal-scale component is
\[
    \frac{
        \mathcal N(u;0,wI_d)
    }{
        \mathcal N(u;0,W_{t,\max}I_d)
    }
    =
    \left(
        \frac{W_{t,\max}}{w}
    \right)^{d/2}
    \exp\!\left[
        -\frac{r^2}{2}
        \left(
            \frac{1}{w}
            -
            \frac{1}{W_{t,\max}}
        \right)
    \right].
\]
Since $w<W_{t,\max}$, this ratio tends to zero as $r\to\infty$.
Consequently, increasingly large residuals are dominated by the
maximal-scale component, and the posterior expected precision converges to
$W_{t,\max}^{-1}$.
\end{proof}

\begin{corollary}[Polynomially heavy-tailed radial corruption]
\label{cor:ht-polynomial-tail}
Suppose the isotropic marginal corruption density can be written as
\[
    q_t(u)=g_t(r),
    \qquad
    r=\lVert u\rVert,
\]
and, for some $m>d$,
\[
    g_t(r)=L_t(r)r^{-m}
\]
for all sufficiently large $r$, where $L_t(r)>0$ is differentiable and
satisfies
\[
    \frac{rL_t'(r)}{L_t(r)}\longrightarrow 0.
\]
Then
\[
    h_t(r)\sim\frac{m}{r^2},
\]
and consequently
\[
    \left\lVert\nabla_u\log q_t(u)\right\rVert
    =r\,h_t(r)
    \sim\frac{m}{r}
    \longrightarrow 0.
\]
Thus polynomially heavy-tailed radial corruptions have the stronger
far-tail property that the absolute restoring score eventually decreases
toward zero.
\end{corollary}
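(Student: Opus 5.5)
The plan is to work directly from the radial form of the score. Since \(q_t(u)=g_t(\lVert u\rVert)\), the chain rule gives
\[
\nabla_u\log q_t(u)=\frac{g_t'(r)}{g_t(r)}\,\frac{u}{r}.
\]
Comparing this with Eq.~\eqref{eq:gsm-score-main}, \(\nabla_u\log q_t(u)=-u\,h_t(r)\), yields the identity
\[
h_t(r)=-\frac{1}{r}\,\frac{g_t'(r)}{g_t(r)}=-\frac{1}{r}\,\frac{d}{dr}\log g_t(r).
\]
This holds for every \(r>0\) at which \(g_t\) is differentiable and positive. For Gaussian scale mixtures this is guaranteed under the integrability condition of Proposition~\ref{prop:ht-posterior-precision}, which licenses differentiation under the expectation. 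Consequently the whole corollary reduces to an asymptotic statement about the logarithmic derivative of \(g_t\).

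Next, for sufficiently large \(r\), I substitute \(g_t(r)=L_t(r)r^{-m}\). Then \(\log g_t(r)=\log L_t(r)-m\log r\), so
\[
r\,\frac{g_t'(r)}{g_t(r)}=\frac{rL_t'(r)}{L_t(r)}-m\longrightarrow -m
\]
by the hypothesis on \(L_t\). Dividing the identity above by \(r\) once more gives
\[
r^2h_t(r)=-r\,\frac{g_t'(r)}{g_t(r)}\to m,
\]
that is, \(h_t(r)\sim m/r^2\). Because \(\lVert u\,h_t(r)\rVert=r\,h_t(r)\) with \(h_t\ge0\) (it is a posterior mean of a positive quantity, by Proposition~\ref{prop:ht-posterior-precision}), we get \(\lVert\nabla_u\log q_t(u)\rVert=r\,h_t(r)\sim m/r\to0\). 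The role of the condition \(m>d\) is only consistency: integrability of \(g_t(r)r^{d-1}\) at infinity requires \(m>d\). It is not used in the limit computation itself.

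The only delicate step is justifying that the radial derivative \(g_t'\) exists and coincides with the score formula on the tail region. The hypothesis supplies differentiability of \(L_t\), and hence of \(g_t\), for large \(r\). On the scale-mixture side, Eq.~\eqref{eq:gsm-score-main} already gives the gradient, so the two expressions for the same gradient must agree wherever both are defined. I would state this matching explicitly and note that the slowly varying condition \(rL_t'/L_t\to0\) is exactly what turns the form of the density, \(g_t\approx L_tr^{-m}\), into an asymptotic for its derivative. A mere asymptotic \(g_t\sim c\,r^{-m}\) would not suffice without such a derivative-level hypothesis.
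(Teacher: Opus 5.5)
Your proposal is correct and follows essentially the same route as the paper: both match the radial chain-rule gradient with Eq.~\eqref{eq:gsm-score-main} to get $h_t(r)=-g_t'(r)/(r\,g_t(r))$, and then use $g_t'/g_t=L_t'/L_t-m/r=-(m+o(1))/r$ to conclude $h_t(r)\sim m/r^2$ and $r\,h_t(r)\sim m/r\to0$. Your extra remarks are accurate points that the paper leaves implicit: positivity of $h_t$ when taking the norm, $m>d$ serving only as an integrability consistency condition, and the need for a derivative-level hypothesis on $L_t$.
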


\begin{proof}
For an isotropic density,
\[
    \nabla_u\log q_t(u)
    =
    \frac{g_t'(r)}{g_t(r)}\frac{u}{r}.
\]
Comparing with Eq.~\eqref{eq:gsm-score-main} gives
\[
    h_t(r)
    =
    -\frac{g_t'(r)}{r\,g_t(r)}.
\]
Since $g_t(r)=L_t(r)r^{-m}$,
\[
    \frac{g_t'(r)}{g_t(r)}
    =
    \frac{L_t'(r)}{L_t(r)}
    -
    \frac{m}{r}
    =
    -\frac{m+o(1)}{r}.
\]
Therefore,
\[
    h_t(r)\sim\frac{m}{r^2},
\]
and multiplying by $r$ gives the stated score-magnitude asymptotic.
\end{proof}

The family-level results above include the corruption choices used in
StochLOB as particular examples. For StochLOB (Gaussian),
$W_t=\sigma_t^2$ deterministically, so
Corollary~\ref{cor:ht-deterministic} gives the fixed precision
$h_t(r)=\sigma_t^{-2}$.

For StochLOB (jump-Poisson),
\[
    W_t
    =
    \sigma_t^2+\sum_{j=1}^{N_t}S_j,
    \qquad
    N_t\sim\operatorname{Poisson}(\Lambda_t),
    \qquad
    S_j\sim\operatorname{Gamma}(\kappa,\theta).
\]
For a nontrivial jump step $\Lambda_t>0$, this scale distribution is
non-degenerate, so Corollary~\ref{cor:ht-random-scale} gives
$h_t'(r)<0$ for $r>0$. In addition, $W_t\ge\sigma_t^2$ almost surely,
so
\[
    0<h_t(r)<\frac{1}{\sigma_t^2}
\]
for every finite residual. Thus the jump-Poisson corruption is one concrete
random-scale construction in which the effective posterior precision falls
below the fixed Gaussian value as the residual grows.

For the ideal, unclipped isotropic $\alpha$-stable corruption, the
connection between stable laws and Gaussian scale mixtures is classical
\citep{west1987scale}, while the standard far-tail density behavior
\citep{samorodnitsky1994stable} is
\[
    q_t(u)
    \sim
    C_t\lVert u\rVert^{-(d+\alpha)},
    \qquad
    \lVert u\rVert\to\infty.
\]
The corresponding smooth density asymptotics for multidimensional
\(\alpha\)-stable laws \citep{zaigraev2006asymptotic} give the derivative
behavior required by Corollary~\ref{cor:ht-polynomial-tail}, with
\(m=d+\alpha\). Hence
\[
    h_t(r)
    \sim
    \frac{d+\alpha}{r^2},
\]
and
\[
    \left\lVert\nabla_u\log q_t(u)\right\rVert
    =
    r\,h_t(r)
    \sim
    \frac{d+\alpha}{r}
    \longrightarrow 0.
\]
Thus the ideal alpha-stable construction is an example of the stronger
polynomial-tail geometry in which both the per-unit restoring precision and
the absolute restoring score weaken in the far tail.

For the clipped alpha-stable corruption implemented in StochLOB,
\[
    A_{\mathrm{clip}}
    =
    \min\!\left(A,Q_{0.999}(A)\right),
    \qquad
    W_t=\sigma_t^2A_{\mathrm{clip}},
\]
and
\[
    W_{t,\max}
    :=
    \sigma_t^2Q_{0.999}(A).
\]
The resulting scale is non-degenerate, bounded above by $W_{t,\max}$, and
places positive probability at the maximal scale because of clipping.
Corollary~\ref{cor:ht-bounded-scale} therefore gives
\[
    h_t'(r)<0,
    \qquad
    h_t(r)\ge\frac{1}{W_{t,\max}},
\]
with
\[
    h_t(r)
    \longrightarrow
    \frac{1}{W_{t,\max}}
    \qquad
    \text{as }r\to\infty.
\]
Thus clipping preserves residual-adaptive downweighting but imposes a
positive precision floor in the extreme tail.

These statements provide a theoretical mechanism that complements the
paper's empirical stress results rather than replacing them. They do not
imply that every non-Gaussian corruption must improve macro-F1. Instead, they
show that the proposed non-Gaussian objectives possess a family-level
residual-adaptation mechanism that the Gaussian baseline cannot possess.
This interpretation is complementary to recent work that replaces Gaussian
perturbations by heavy-tailed or L\'evy alternatives in denoising score
matching and diffusion models
\citep{deasy2021heavytailed,yoon2023levy,pandey2025heavytailed,shariatian2025dlpm}:
rather than proposing a new generative reverse process, the analysis here
isolates the posterior-precision mechanism induced by the latent scale in the
supervised StochLOB objective. Proposition~\ref{prop:score-uncertainty}
further shows that, when the corruption scale is random, the same latent-scale
posterior also induces uncertainty about the component-level restoring score,
with this posterior score uncertainty determined by the variance of the
latent corruption precision. Thus the non-Gaussian scale-mixture objectives
differ from the Gaussian baseline not only in how their effective restoring
precision adapts to the observed residual, but also in allowing uncertainty
over the latent corruption scale underlying that restoration. The stress
experiments determine whether these structural properties are useful for the
observed LOB prediction problem.

\subsection{Mathematical Rationale for Future Work}
\label{app:future-work-math}

The future directions in Section~\ref{sec:discussion} follow from two related
facts. First, the current score target is a posterior precision. Second,
robustness transfer from a training corruption law to an unseen stress law is
controlled by both the size and the direction of the shocks covered by the
training law.

\paragraph{From scalar precision to stress coverage.}
At a fixed noise scale, the present model uses an isotropic scale mixture
\[
    q_t(u)=\int \mathcal N(u;0,wI_d)\,p_t(dw),
    \qquad
    \nabla_u\log q_t(u)
    =
    -u\,\E[W_t^{-1}\mid u].
\]
This identifies the quantity that future work should generalize: the scalar
posterior precision \(h_t(\lVert u\rVert)\). The next question is whether a
corruption law \(q_t\) that is useful during training also covers a deployment
stress law \(\nu\). Let \(L(u)\ge0\) be the loss incurred under a perturbation
\(u\), and assume the perturbation is a nuisance for the label,
\[
    p(y\mid x_0+u)=p(y\mid x_0).
\]
If \(\nu\ll q_t\), Cauchy--Schwarz gives the second-order transfer bound,
familiar from importance weighting \citep{cortes2010importance}:
\[
    \E_{\nu}[L]
    =
    \int L(u)\frac{\nu(u)}{q_t(u)}q_t(u)\,du
    \le
    \sqrt{\E_{q_t}[L^2]}\,
    \sqrt{1+\chi^2(\nu\Vert q_t)},
    \qquad
    1+\chi^2(\nu\Vert q_t)
    =
    \int \frac{\nu(u)^2}{q_t(u)}\,du .
\]
Thus the training objective can transfer to stress only when two quantities
are controlled: a training-side second moment and the coverage constant
\(1+\chi^2(\nu\Vert q_t)\). This is a sufficient condition, not a claim that
large \(\chi^2\) necessarily implies poor accuracy. The same constant governs
the sample size required for importance sampling, where it is known to grow
with an intrinsic dimension of the mismatch \citep{agapiou2017importance}.

The fixed-scale statement matches the implementation because noise levels are
sampled from a finite grid. If a scale \(t\) is sampled with probability
\(\pi_t\) and loss weight \(w_t\), a weighted average objective controls the
per-scale loss up to the factor \((\pi_t w_t)^{-1}\). For a continuous
schedule, an analogous deployed statement would require a finite average or
supremum of \(1+\chi^2(\nu\Vert q_t)\) over \(t\).

\paragraph{Radial--angular factorization.}
Write \(u=r\theta\), where \(r>0\) and
\(\theta\in\mathbb S^{d-1}\). Suppose the stress law and corruption law have
regularly varying tails
\[
    \nu(r\theta)
    =
    r^{-(d+\alpha_\nu)}L_\nu(r)\psi_\nu(\theta)(1+o(1)),
    \qquad
    q_t(r\theta)
    =
    r^{-(d+\alpha_q)}L_q(r)\psi_q(\theta)(1+o(1)),
\]
uniformly in \(\theta\), following the standard radial--angular form of
multivariate regular variation
\citep{bingham1987regular,resnick2007heavy,basrak2002characterization}. Here
\(\alpha_\nu\) and \(\alpha_q\) are tail indices, \(L_\nu\) and \(L_q\) are
slowly varying radial factors, and \(\psi_\nu,\psi_q\) are angular densities
with respect to the uniform probability measure on the sphere. Splitting the
integral at \(r_0\) gives
\[
    1+\chi^2(\nu\Vert q_t)-C_0(r_0)
    =
    (1+o(1))\,
    I_{\mathrm{rad}}(r_0)\,
    I_{\mathrm{ang}},
\]
where
\[
\begin{aligned}
I_{\mathrm{rad}}(r_0)
&=
|\mathbb S^{d-1}|
\int_{r_0}^{\infty}
r^{-1-(2\alpha_\nu-\alpha_q)}
\frac{L_\nu(r)^2}{L_q(r)}\,dr,
\\[4pt]
I_{\mathrm{ang}}
&=
\int_{\mathbb S^{d-1}}
\frac{\psi_\nu(\theta)^2}{\psi_q(\theta)}
\,d\sigma(\theta)
\\
&=
\exp\!\left(
D_2(\psi_\nu\Vert\psi_q)
\right).
\end{aligned}
\]
The angular term is the exponentiated order-2 R\'enyi divergence
\citep{renyi1961entropy,vanerven2014renyi}.
The radial factor is finite when \(\alpha_q<2\alpha_\nu\) and divergent
when \(\alpha_q>2\alpha_\nu\); the critical case depends on the slowly
varying term. This explains why Gaussian corruption is a weak stress proxy
for heavy-tailed LOB residuals: a Gaussian \(q_t\) decays exponentially, while
a heavy-tailed \(\nu\) decays polynomially, so \(\nu^2/q_t\) is not
integrable in the far tail.

The factorization also shows a limitation of the current isotropic design.
Heavy tails improve radial coverage, but they do not automatically cover the
directions in which market stress occurs. If the corruption allocates mass
uniformly over directions while realized stress lies near a low-dimensional
signed order-flow subspace, the angular term \(I_{\mathrm{ang}}\) can make the
bound numerically vacuous even when the radial tail index is adequate.

\paragraph{High-dimensional angular penalty.}
The scale of this effect is large for LOB windows. Let \(\nu\) be an
elliptical stress law with shape matrix \(\Sigma_\nu\), eigenvalues
\(\lambda_1,\ldots,\lambda_d\) normalized to unit mean, and tail index
\(\alpha_\nu\) \citep{cambanis1981elliptical,hult2002multivariate}.
Against isotropic corruption, define
\[
    Q(\theta)=\theta^\top\Sigma_\nu^{-1}\theta,
    \qquad
    p_\nu=\frac{d+\alpha_\nu}{2},
    \qquad
    K(s)=\log \E_{\operatorname{Unif}}\!\left[Q(\theta)^{-s}\right].
\]
The angular divergence has the exact representation
\[
    D_2(\psi_\nu\Vert\operatorname{Unif})
    =
    K(2p_\nu)-2K(p_\nu).
\]
For a nearly spherical spectrum, with
\(\operatorname{CV}(\lambda)=
\operatorname{std}(\lambda)/\operatorname{mean}(\lambda)\), the cumulant
expansion gives
\[
    D_2(\psi_\nu\Vert\operatorname{Unif})
    =
    \frac{(d+\alpha_\nu)^2}{2(d+2)}
    \operatorname{CV}^2(\lambda)\,(1+o(1))
    =
    \frac{d}{2}\operatorname{CV}^2(\lambda)(1+O(d^{-1})).
\]
The coefficient is proportional to dimension. For the \(60\times31\) crypto
LOB window used in this paper, \(d=1860\), so keeping this angular exponent
near order one requires approximately
\(\operatorname{CV}(\lambda)\lesssim d^{-1/2}\approx2.3\%\). A 20\% spectral
dispersion can multiply \(1+\chi^2(\nu\Vert q_t)\) by about \(10^{16}\), which
inflates the square-root transfer bound by about \(10^8\).

\begin{table}[t]
\centering
\begin{tabular}{@{}cccccc@{}}
\toprule
 & \multicolumn{2}{c}{symmetric spectrum} &
 \multicolumn{2}{c}{skewed spectrum} & \\
\cmidrule(lr){2-3}\cmidrule(lr){4-5}
\(\operatorname{CV}(\lambda)\) &
\(D_2\) & \(\exp(D_2)\) &
\(D_2\) & \(\exp(D_2)\) &
\(\exp(D_2/2)\) (sym.) \\
\midrule
0.01 & 0.09 & \(1.1\) & 0.09 & \(1.1\) & \(1.05\) \\
0.05 & 2.32 & \(1.0\times10^{1}\) & 2.37 & \(1.1\times10^{1}\) & \(3.2\) \\
0.10 & 9.26 & \(1.1\times10^{4}\) & 10.0 & \(2.2\times10^{4}\) & \(1.0\times10^{2}\) \\
0.20 & 36.6 & \(7.9\times10^{15}\) & 47.1 & \(2.9\times10^{20}\) & \(8.9\times10^{7}\) \\
\bottomrule
\end{tabular}
\caption{Angular penalty at \(d=1860\) and \(\alpha_\nu=1.62\). The
symmetric spectrum places half the coordinates at \(1+\operatorname{CV}\) and
half at \(1-\operatorname{CV}\); the skewed spectrum places the excess on
5\% of the coordinates. The angular factor multiplies the coverage constant,
and its square root multiplies the transfer bound.}
\label{tab:angular-penalty}
\end{table}

Table~\ref{tab:angular-penalty} makes the design implication explicit:
future corruption laws should not only be heavy-tailed; they should also
estimate the spectral geometry of realized stress, for which nonparametric
angular-measure estimators exist \citep{einmahl2001spectral}.

\paragraph{State-dependent and level-correlated shocks.}
Let \(z\) denote the observed market state before corruption; queue-reactive
models show that order-flow intensities depend strongly on this state
\citep{huang2015queue}. Replace the scalar variance \(W_tI_d\) by a
positive-definite random covariance \(\Sigma_t\). Student-\(t\) and
\(\alpha\)-stable diffusion kernels are special cases of this scale-mixture
construction \citep{pandey2025heavytailed,shariatian2025dlpm}:
\[
    q_t(u\mid z)
    =
    \int
    \mathcal N(u;0,\Sigma)\,p_t(d\Sigma\mid z).
\]
The score becomes
\[
    \nabla_u\log q_t(u\mid z)
    =
    -
    \E\!\left[\Sigma_t^{-1}\mid u,z\right]u.
\]
Thus \(h_t(r)I_d\) is replaced by the posterior precision matrix
\(\E[\Sigma_t^{-1}\mid u,z]\). This extension keeps the same posterior-score
principle, but lets the corruption focus probability mass on the level pairs,
assets, and temporal coordinates where stress is empirically concentrated.

\paragraph{Signed empirical jump replay.}
Real order-book jumps are not only large; they are signed and structured.
This can be represented by a random jump vector \(J_t\) and covariance
\(\Sigma_t\):
\[
    q_t(u\mid z)
    =
    \iint
    \mathcal N(u-j,\Sigma)\,p_t(dj,d\Sigma\mid z).
\]
Bayes' rule gives the score target
\[
    \nabla_u\log q_t(u\mid z)
    =
    -
    \E\!\left[
        \Sigma_t^{-1}(u-J_t)
        \mid u,z
    \right].
\]
The current model is the special case \(J_t=0\) and \(\Sigma_t=W_tI_d\). An
empirical jump-replay extension would estimate \(p_t(dj,d\Sigma\mid z)\) from
observed order-book shock residuals, so the auxiliary task would learn not
only how much to denoise a large residual, but also which signed shock pattern
made that residual plausible. In the coverage notation above, this is a way
to reduce both the radial mismatch and the angular R\'enyi penalty.

\paragraph{Ablations and seed-level uncertainty.}
The training objective can be written as
\[
    \mathcal L(\theta;\lambda_s)
    =
    \mathcal L_{\CE}(\theta)
    +
    \lambda_s\mathcal L_{\mathrm{score}}(\theta),
\]
where \(\lambda_s\) controls score matching. For any evaluation metric \(M\),
the marginal effect of the score loss can be measured by
\[
    \Delta_s
    =
    M(\lambda_s)-M(0).
\]
Across \(m\) random seeds, the uncertainty of either effect can be reported as
\[
    \bar{\Delta}
    \pm
    t_{m-1,0.975}\frac{\widehat{\sigma}_{\Delta}}{\sqrt m}.
\]
This is the direct way to quantify the contribution of the score target. The
same design can compare temporal readouts by treating the readout family as a
controlled factor and reporting paired seed-level differences in \(M\).

\paragraph{Cost-aware portfolio thresholds.}
Let \(\pi_{\tau,t}\) be the portfolio generated from model probabilities using
threshold vector \(\tau\), let \(r_{t+1}\) be next-period asset returns, and
let \(c_t(\Delta\pi_{\tau,t})\) denote transaction costs. A validation-tuned
threshold should solve
\[
    \tau^\star
    =
    \arg\max_{\tau}
    \frac{1}{T}
    \sum_{t=1}^{T}
    \left[
        \pi_{\tau,t}^{\top}r_{t+1}
        -
        c_t(\Delta\pi_{\tau,t})
        -
        \frac{\gamma}{2}
        \pi_{\tau,t}^{\top}\Omega_t\pi_{\tau,t}
    \right],
\]
This is the single-period mean--variance objective with transaction costs
studied by \citet{garleanu2013dynamic} and
\citet{boyd2017multiperiod}; the threshold vector \(\tau\) replaces their
continuous trade variable. Here \(\Omega_t\) is a risk estimate and
\(\gamma\ge0\) controls risk
aversion. This makes the Feishu extension explicit: the classifier supplies
probabilities, but the final trading rule should be selected by expected
net utility under market frictions, not by fixed cutoffs chosen independently
of costs.

\end{document}